\documentclass{article}

\usepackage{microtype}
\usepackage{graphicx}
\usepackage{subcaption}
\usepackage{booktabs} 

\usepackage{hyperref}

\newcommand{\conf}[1]{{\scriptsize\textcolor{gray}{(#1)}}}

\newcommand{\Action}[1]{\textsc{#1}}
\newcommand{\Think}{\Action{Think}}
\newcommand{\Interact}{\Action{Interact}}
\newcommand{\Final}{\Action{Final}}
\newcommand{\FinalPred}[1]{\Final(#1)}
\newcommand{\Backtrack}{\Action{Backtrack}}
\newcommand{\Expand}{\Action{Expand}}
\newcommand{\Sample}{\Action{Sample}}

\usepackage{array}
\newcolumntype{C}[1]{>{\centering\arraybackslash}p{#1}}

\newcommand{\cmark}{\textcolor{darkgray}{\ding{51}}}
\newcommand{\xmark}{\textcolor{lightgray}{\ding{55}}}

\usepackage{booktabs}   
\usepackage{multirow}   
\usepackage{graphicx}   
\usepackage[table]{xcolor}     
\usepackage{pifont}     
\usepackage{adjustbox}

\usepackage{listings}
\usepackage[most]{tcolorbox}
\tcbuselibrary{listings,breakable,skins}

\definecolor{rowgray}{RGB}{245,245,245}
\definecolor{rowblue}{RGB}{219,239,252}

\usepackage[accepted]{icml2026}

\usepackage{amsmath}
\usepackage{amssymb}
\usepackage{mathtools}
\usepackage{amsthm}

\makeatletter
\renewenvironment{proof}[1][\proofname]{%
  \par\pushQED{\qed}\normalfont%
  \topsep=0pt\partopsep=0pt\parsep=0pt\itemsep=0pt%
  \trivlist%
  \item[\hskip\labelsep\itshape #1\@addpunct{.}]\ignorespaces%
}{%
  \popQED\endtrivlist\@endpefalse%
}
\makeatother

\usepackage{threeparttable} 

\usepackage{subcaption} 
\usepackage{graphicx}

\usepackage{enumitem}

\usepackage[capitalize,noabbrev]{cleveref}

\theoremstyle{plain}
\newtheorem{theorem}{Theorem}[section]

\theoremstyle{definition}

\newtheorem{assumption}[theorem]{Assumption}
\theoremstyle{remark}

\usepackage[textsize=tiny]{todonotes}

\icmltitlerunning{Learning to Watch: Active Video Anomaly Understanding via Interleaved Policy Optimization}

\begin{document}
\twocolumn[
  \icmltitle{Learning to Watch: Active Video Anomaly Understanding \\ via Interleaved Policy Optimization}
  

  \icmlsetsymbol{equal}{*}

  \begin{icmlauthorlist}
    \icmlauthor{Mengjingcheng Mo}{cqupt}
    \icmlauthor{Jiaxu Leng}{cqupt,ccai}
    \icmlauthor{Xinbo Gao}{cqupt}
  \end{icmlauthorlist}

  \icmlaffiliation{cqupt}{School of Computer Science and Technology, Chongqing University of Posts and Telecommunications, Chongqing, China}
  \icmlaffiliation{ccai}{Chongqing College of Artificial Intelligence, Chongqing, China}

  \icmlcorrespondingauthor{Xinbo Gao}{gaoxb@cqupt.edu.cn}

  \icmlkeywords{Machine Learning, ICML}

  \vskip 0.3in
]



\printAffiliationsAndNotice{}  

\begin{abstract}
Video anomaly understanding (VAU) relies on sparse, context-dependent cues. However, existing passive paradigms suffer from observational aliasing, where static sampling fails to disambiguate semantically distinct events. To overcome this, we propose $Anom\text{-}\pi$, a closed-loop framework that reconceptualizes video understanding as an active sequential decision-making process within a dynamic environment. 
Inspired by human video-reviewing behavior, this framework unifies internal cognitive reasoning and strategic evidence acquisition into an interleaved policy, utilizing temporal atomic operators such as local backtracking, temporal expansion, and fine-grained sampling to endow the model with perceptual proactivity. 
To learn such complex interaction strategies under video-level weak supervision, we design Interactive Direct Preference Optimization (iDPO) to achieve trajectory-level policy alignment, guided by an Active Evidence Inquiry (AEI) utility that balances task success, informative evidence acquisition, and interaction cost. This approach enables the agent to learn to actively disambiguate hypotheses while suppressing redundant exploration. Extensive experiments demonstrate that our framework, with only 2B parameters, achieves highly competitive performance, significantly outperforming state-of-the-art large-scale VAU models in complex scenarios.
\end{abstract}

\section{Introduction}

\begin{figure}[t]
  \centering

  \begin{subfigure}[t]{\linewidth}
    \centering
    \includegraphics[width=\linewidth]{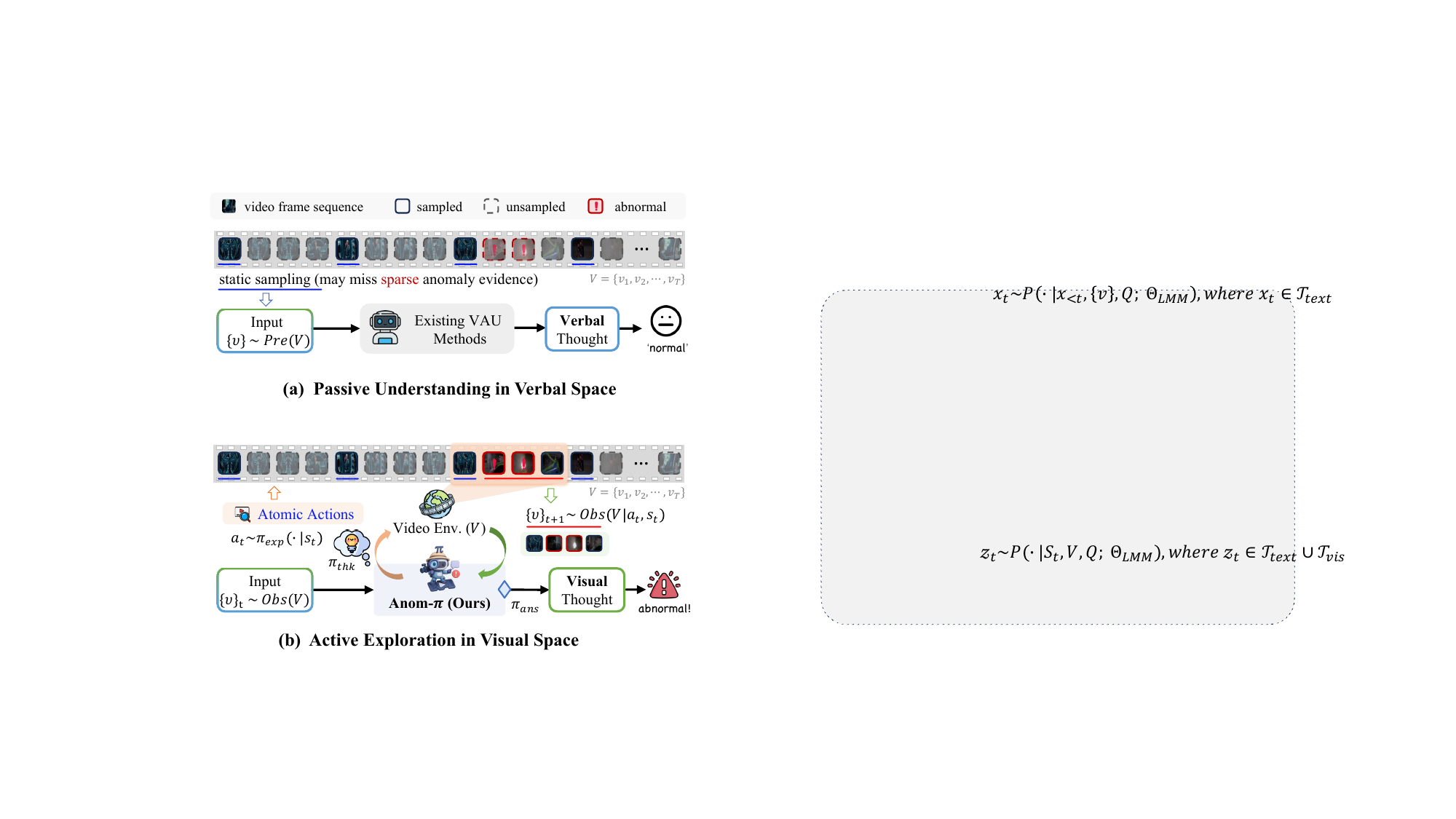} 
    \caption{Passive Understanding in Verbal Space}
    \label{fig:mov_a}
  \end{subfigure}

  \vspace{0.6em} 

  \begin{subfigure}[t]{\linewidth}
    \centering
    \includegraphics[width=\linewidth]{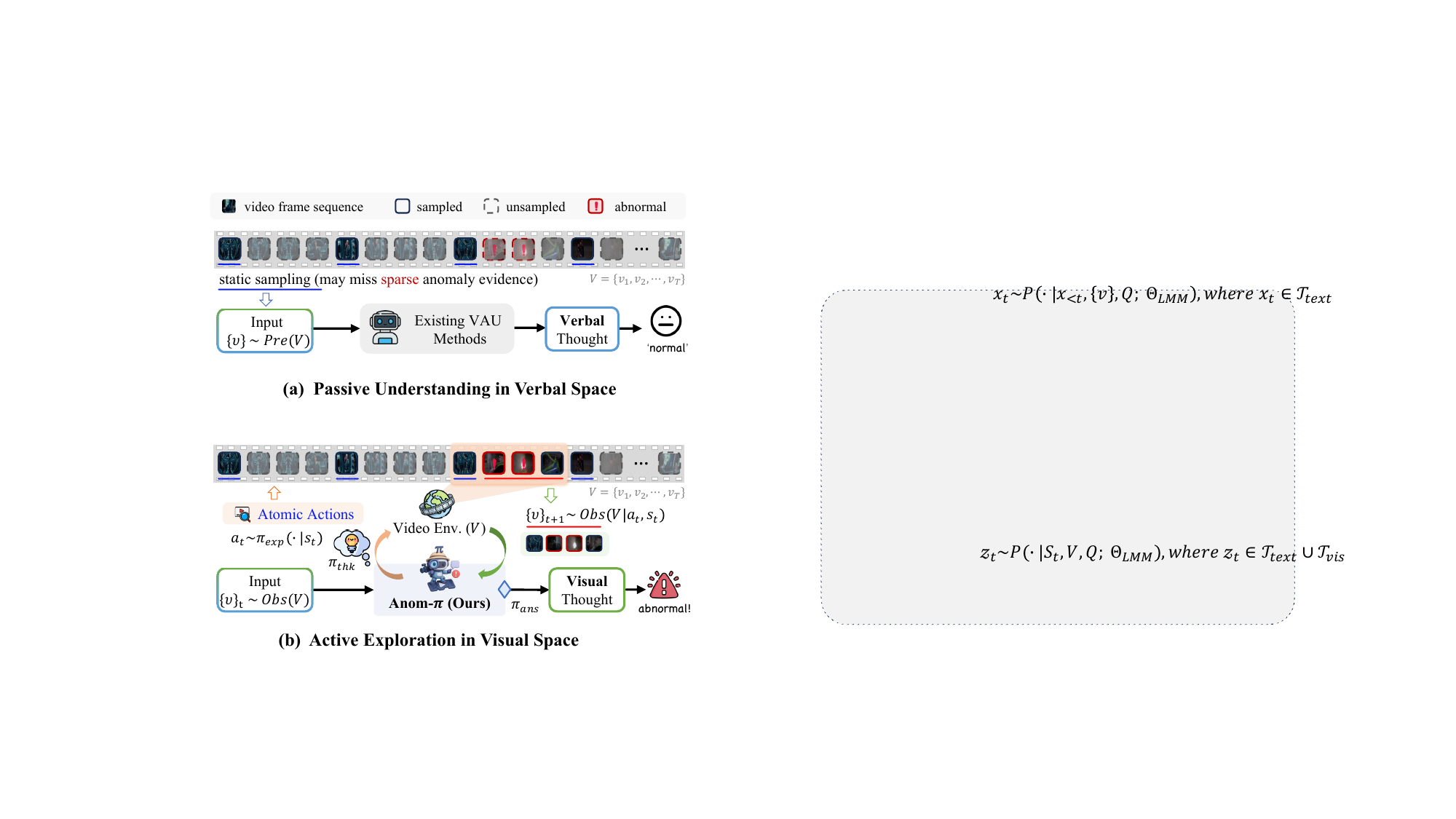} 
    \caption{Active Exploration in Visual Space}
    \label{fig:mov_b}
  \end{subfigure}

\caption{\textbf{Comparison of passive and active evidence acquisition for video anomaly understanding.} (a) Existing VAU methods use static sampling in the verbal space, which may miss sparse anomaly evidence. (b) Anom-$\pi$ actively explores the video in the visual space to gather informative observations and detect anomalies.}
  \label{fig:motivation}
\end{figure}

Video anomaly understanding (VAU) aims to build a general anomaly recognition capability for open-world diversity. The task requires localizing anomalous events in complex video streams and producing verifiable evidence.
Discriminative cues for anomalies are typically sparse, making ambiguous cases rely on temporal context for disambiguation.
However, incorporating broader context introduces substantial redundant background that can drown out the few critical cues~\cite{chen2025aligning}. 
General-purpose large language models (LLMs) may then fail to reliably aggregate evidence into verifiable decision chains, yielding suboptimal decisions. This highlights a bottleneck beyond reasoning, namely how evidence is selected and coupled with the reasoning process.

This bottleneck is particularly acute under weak supervision~\cite{wu2020not,sultani2018real}, a practical regime where training provides only video-level labels while evaluation requires frame-level detection. In this setting, a key challenge is how to shape LLM-based anomaly decision behavior using only coarse supervision.
Existing LLM-based methods can be broadly categorized into three routes. Training-free approaches~\cite{zanella2024harnessing,li2026vadtree,yang2026panda} rely on prompt-based reasoning without data-driven optimization, using frozen LLM pipelines to score anomalies. In contrast, supervision-intensive approaches~\cite{du2024uncovering, zhang2025holmes, huang2026vad} construct anomaly-focused instruction data and fine-tune large multimodal models for detection and explanation. Alternatively, some methods~\cite{tang2024hawk, huangex} train specialized anomaly detectors for detection and use LLMs mainly for interpretation or caption generation. 

These methods remain passive and open-loop, with evidence fixed before inference,  as illustrated in Fig.~\ref{fig:mov_a}. Even with multi-step reasoning, the pipeline is one-way from perception to cognition, without feedback to refine evidence acquisition under uncertainty. 
One approach~\cite{zhang2025holmes, chen2025aligning} is to improve evidence selection with auxiliary scorers or lightweight models that estimate importance or anomaly likelihood. Yet evidence acquisition is still decoupled from the final decision, and these proxy scores can break under domain shift. More fundamentally, the open-loop setting prevents learning an understanding-driven evidence-accumulation strategy, leaving the model unable to acquire more evidence under uncertainty or terminate rationally once sufficient, which reflects limited perceptual autonomy.

To address this mismatch, we propose Anom-$\pi$, which reframes anomaly understanding from static classification into an active, closed-loop hypothesis-verification process (Fig.~\ref{fig:mov_b}). 
We cast anomaly understanding as sequential decision-making and treat \Think, \Interact, and \Final{} as peer discrete actions, yielding an interleaved policy $\pi$. Specifically, \Interact{} allows the agent to actively gather evidence through operations such as local backtracking, temporal expansion, and fine-grained sampling.
Driven by the gathered information, \Final{} then serves as an evidence-sufficiency-based termination action rather than a post-hoc rule driven by external thresholds.
The policy can invoke reproducible atomic observation operators to probe ambiguity and terminate once evidence is sufficient, thereby closing the perception loop and reducing high-confidence errors induced by open-loop pipelines.

To learn $\pi$ without fine-grained action or state supervision, we adopt preference-based policy learning. For each video, we sample multiple trajectories and construct chosen/rejected preference pairs under video-level weak-label constraints, turning interaction choices and termination timing into scalable training signals. 
We define preferences by the value of information, favoring interactions that increase hypothesis separability while suppressing ineffective or redundant observations.
Concretely, we instantiate this principle with an Active Evidence Inquiry (AEI) utility that combines task success, intrinsic inquiry incentives, and interaction cost.
We then optimize an Interactive Direct Preference Optimization (iDPO) objective to achieve trajectory-level alignment, bypassing explicit reward modeling and the inherent instabilities of on-policy policy gradients.
Importantly, rejected trajectories provide a contrastive signal that suppresses near-miss behaviors, where outcomes appear correct but evidence acquisition or stopping decisions are flawed, improving cross-scenario generalization.

Our main contributions are as follows. (1) We propose Anom-$\pi$, which reformulates video anomaly understanding as a closed-loop sequential decision-making process with active interaction. (2) We design a set of reproducible and composable atomic observation operators, and implement a controllable interact interface via tool calls. (3) We propose Interactive Direct Preference Optimization (iDPO), guided by an Active Evidence Inquiry (AEI) utility, to align the interleaved policy $\pi$ at the trajectory level. (4) We validate the effectiveness and generalization of the learned interleaved policy on multiple benchmarks, and show that active evidence acquisition improves evidence discovery and decision reliability.

\section{Related Work}
\label{sec:related}

\textbf{Video Anomaly Detection.}
The task is challenging due to sparse anomalies, costly temporal annotations, and open-set semantics~\cite{sultani2018real, wu2020not, acsintoae2022ubnormal}. Most practical pipelines are weakly supervised and learn clip-level anomaly scores from video-level labels under the multi-instance learning paradigm~\cite{tian2021weakly}, using uncertainty regulation~\cite{zhou2023dual}, self-distillation~\cite{ristea2024self}, or multimodal aggregation~\cite{majhi2025just}.
Recent work further incorporates semantic priors by leveraging clip-level semantics~\cite{wu2024vadclip, chen2024prompt, wang2025learning} or vision-language representations~\cite{huangex,li2025video,li2023logical,li2025shape}, improving discriminability and interpretability under coarse supervision.
Overall, existing methods typically follow a fixed observation protocol with predefined sampling and single-pass inference, which may degrade when anomalies are sparse or evidence is highly contextual. \par

\begin{figure*}[htbp]
  \centering
  \includegraphics[width=\textwidth]{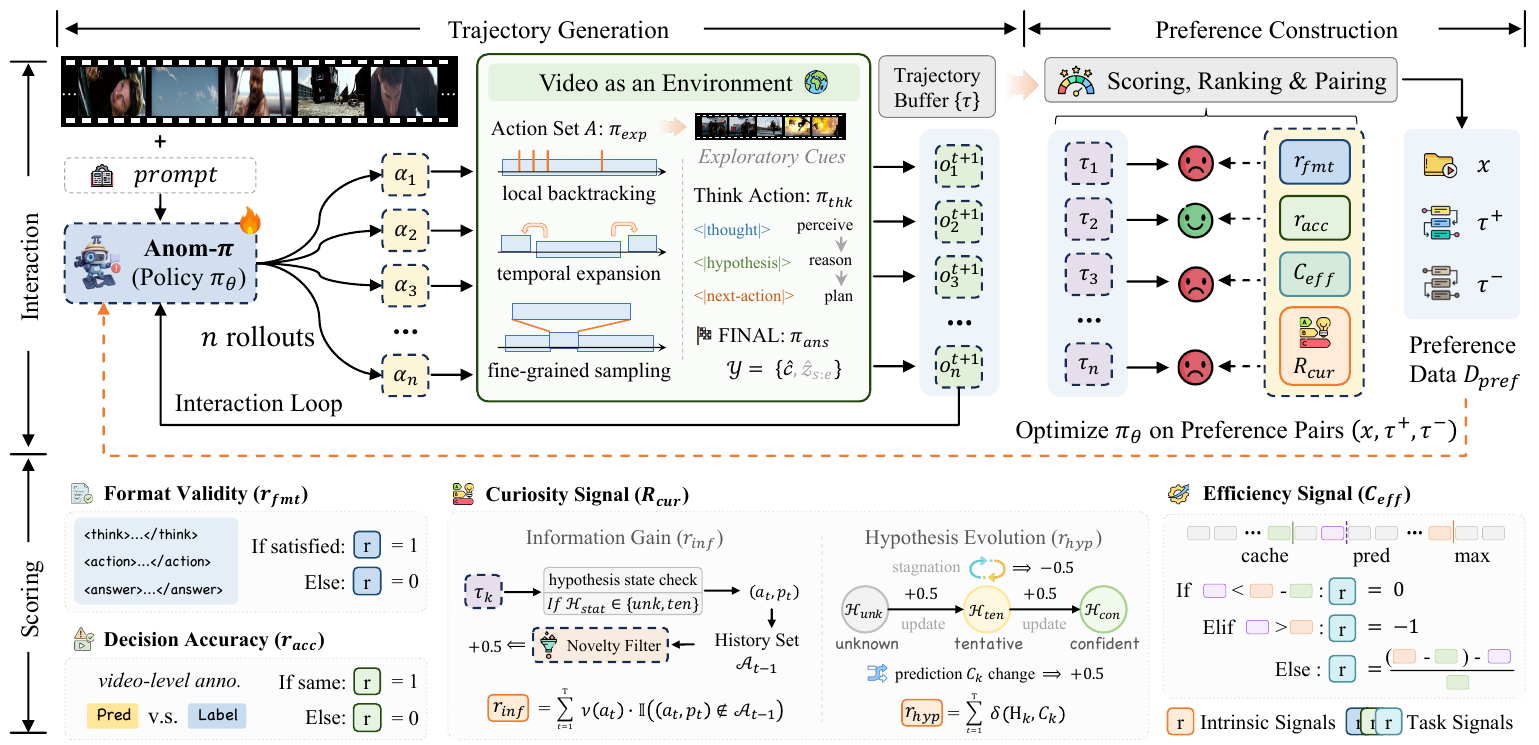}
  \caption{\textbf{Overview of Anom-$\pi$.} \textbf{Left:} closed-loop active inquiry that alternates deliberation (\textsc{Think}) with evidence acquisition (\textsc{Backtrack}/\textsc{Expand}/\textsc{Sample}) before terminating with \textsc{Final} to output clip-level hypotheses. \textbf{Right:} trajectory-level preference construction and optimization via iDPO from ranked rollouts.}
  \label{fig:framework}
\end{figure*}

\textbf{Video Anomaly Understanding.}
Recent studies move beyond scoring and localization toward video anomaly understanding, producing anomaly categories~\cite{huangex} and natural language rationales via vision-language and language models~\cite{zanella2024harnessing}. Training-free paradigms~\cite{shao2025eventvad,li2026vadtree,yang2026panda} can reason over temporal cues with frozen models~\cite{liu2024improved,chen2024internvl}, but their reliability is limited by expert-guided orchestration pipelines.
Complementary approaches improve sensitivity to sparse anomalies through lightweight supervision, including prompt learning~\cite{yang2024follow, ye2025vera}, lightweight adapters~\cite{zhang2025holmes}, and token-level selection~\cite{tang2024hawk} or alignment~\cite{chen2025aligning} that steers model attention toward anomaly evidence.
Recent works also explore reinforcement-style reasoning optimization for VAU and related anomaly-understanding tasks~\cite{huang2026vad,zhu2025vau,mo2026a2seek,yu2026cuebench,li2026iad,kang2026judo}.
Orthogonal to these reasoning-optimization efforts, we focus on making evidence acquisition itself learnable by reformulating VAU as an interaction policy that dynamically adapts its observation strategy based on evolving anomaly hypotheses, synergistically coupling active perception with iterative reasoning.

\section{Method}


\subsection{Problem Formulation}
\label{subsec:problem_formulation}
We view video anomaly understanding as \emph{active inquiry} in a dynamic video environment.
Given an untrimmed video $V=\{x_t\}_{t=1}^{T}$, we form overlapping clips with a sliding window of length $L$ and stride
$\Delta$: $c_i = \{x_t\}_{t=t_i}^{t_i+L-1}$, where $t_i = 1+(i-1)\Delta$.
Each clip $c_i$ defines an episode where the agent navigates an information space via a unified action space
$\mathcal{A}=\mathcal{A}_{\mathrm{inq}}\cup\{\Think,\Final\}$, with
$\mathcal{A}_{\mathrm{inq}}=\{\Backtrack,\Expand,\Sample\}$.
At step $n$, the agent conditions on the trajectory $\tau_n=(o_1,a_1,e_1,\dots,o_n)$, where $o_n$ encapsulates the
current belief (including a structured hypothesis state $\mathrm{hyp}_n$) and $e_n$ is the evidence returned by
investigative actions.
The episode terminates when $a_n=\Final$, yielding up to $M$ hypotheses $(\hat{y}_{i,m},\hat{b}_{i,m},\hat{p}_{i,m})$,
where $\hat{y}_{i,m}\in\mathcal{Y}$ is the category, $\hat{b}_{i,m}=(\hat{t}^{\mathrm{st}}_{i,m},\hat{t}^{\mathrm{ed}}_{i,m})$
is the temporal range (within the clip), and $\hat{p}_{i,m}\in[0,1]$ is the confidence.
We define the clip-level anomaly confidence as $\hat{z}_i=\max_m \hat{p}_{i,m}$.
For frame-level scoring, each clip episode produces a local score vector over its $L$ frames.
If the episode predicts normal or has no valid anomalous range, all local scores are $0$; otherwise, frames inside the
predicted anomalous range(s) receive score $1$, and the remaining frames in the same clip receive score $0.5$.
The final score of a video frame is obtained by averaging the scores from all overlapping clips covering that frame.

\subsection{Closed-Loop Interactive Inference}
\label{subsec:actions_closed_loop}

Unlike one-shot, open-loop mappings, Anom-$\pi$ performs \emph{closed-loop} inference by alternating epistemic
\emph{deliberation} and \emph{evidence queries} (Fig.~\ref{fig:framework}, left).
This mechanism enables hypothesis revision conditioned on newly acquired evidence and allows the agent to defer decisions
when information is insufficient. 

At each step $n$, the agent selects an action $a_n\in\mathcal{A}$.
Investigative actions $\mathcal{A}_{\mathrm{inq}}$ acquire new evidence $e_n$ by changing the observation view.
These actions are designed to mimic human video-reviewing behaviors (\emph{learning to watch}), rather than being arbitrary tool calls:
\textsc{Backtrack} focuses on a subset of frames within the current clip to zoom in on potentially informative cues;
\textsc{Expand} queries extended temporal context beyond the current clip to disambiguate;
\textsc{Sample} densely probes around suspicious moments for higher-resolution cues.
The epistemic action \Think \  does not interact with the environment; it updates the structured hypothesis state
$\mathrm{hyp}_n$ by integrating newly acquired evidence.
The termination action \Final \ ends the episode and outputs up to $M$ hypotheses
$(\hat{y}_{i,m},\hat{b}_{i,m},\hat{p}_{i,m})$ with a one-sentence explanation.

Closed-loop interaction follows the iterative pattern
$\Think\rightarrow a_{\mathrm{inq}}\in\mathcal{A}_{\mathrm{inq}}\rightarrow\Think$.
In each round, the current hypothesis $\mathrm{hyp}_n$ guides which information is missing, an investigative action is chosen
accordingly, and the returned evidence $e_n$ is used to recalibrate the hypothesis.
The stopping decision $a_n=\Final$ is part of the learned policy $\pi_\theta$ rather than an external threshold.
In practice, the agent tends to stop when its hypothesis becomes confident and additional queries yield diminishing
information value. We provide a value-of-information (VoI) analysis in Section~\ref{sec:theory}.

\subsection{Incentivizing Active Evidence Inquiry}
\label{subsec:aei}

To guide the policy $\pi_\theta$ to navigate efficiently in the unified action space, we design an Active Evidence
Inquiry (AEI) utility that combines task-driven rewards, intrinsic inquiry incentives, and interaction costs.
For a trajectory $\tau$, we define
\begin{equation}
\label{eq:aei_utility}
\quad U(\tau)=R_{\text{task}}(\tau)+\lambda\,R_{\text{aei}}(\tau)-\eta\,C(\tau),
\end{equation}
where $R_{\text{task}}$ encourages task success, $R_{\text{aei}}$ incentivizes informative evidence acquisition, $C(\tau)$
penalizes excessive interaction, and $\lambda$ and $\eta$ are balancing coefficients.

\textbf{Task-driven rewards ($R_{\text{task}}$).}
We use external rewards to ensure the final decision is both accurate and well-formed.
In practice, $R_{\text{task}}$ is a weighted combination of two binary signals: \emph{decision accuracy}
($r_{\text{acc}}$; matching clip-level supervision) and \emph{format validity} ($r_{\text{fmt}}$; conforming to the
prescribed structured output template).

\textbf{Active inquiry incentives ($R_{\text{aei}}$).}
To prevent premature guessing under insufficient evidence, we introduce an epistemic drive with two components.
We define the intrinsic acquisition return as
\begin{equation}
\label{eq:r_aei}
R_{\text{aei}}(\tau)=\sum_{t=1}^{N} r_{\text{cur}}(\tau_t),\; r_{\text{cur}}(\tau_t)=r_{\text{inf}}(\tau_t)+r_{\text{hyp}}(\tau_t),
\end{equation}
where $t$ indexes interaction steps (actions) and $N$ is the trajectory length.
(i) \textit{Information gain} ($r_{\text{inf}}$): following the novelty filter in Fig.~\ref{fig:framework}, let $(a_t,p_t)$
denote the step-$t$ investigative action and its queried location (e.g., temporal position), and let $\mathcal{A}_{t-1}$ be
the set of previously queried action-location pairs.
We define
\begin{equation}
\label{eq:r_inf}
 r_{\text{inf}}(\tau_t)=v(a_t)\,\mathbb{I}\big[(a_t,p_t)\notin\mathcal{A}_{t-1}\big],
\end{equation}
where $v(a_t)$ optionally weights actions by their informativeness.
(ii) \textit{Belief refinement} ($r_{\text{hyp}}$): we reward meaningful hypothesis evolution in the structured belief state
$\mathrm{hyp}_t$.
Concretely, we grant a bonus when (i) the hypothesis state advances (e.g., \emph{unknown}$\to$\emph{tentative}$\to$\emph{confident})
or (ii) newly acquired evidence triggers a revision of the predicted category.
A simple instantiation is
\begin{equation}
\label{eq:r_hyp}
 r_{\text{hyp}}(\tau_t)=\alpha\,\mathbb{I}[\mathcal{H}_{t}\succ\mathcal{H}_{t-1}] + \gamma\,\mathbb{I}[\hat{y}_{t}\neq\hat{y}_{t-1}],
\end{equation}
where $\mathcal{H}_t$ denotes the discrete hypothesis status at step $t$ and $\succ$ is an ordering from less to more
confident.
In our implementation, we assign a reward bonus of $0.5$ for successful refinement.

\textbf{Efficiency and rational termination ($C(\tau)$).}
We assign interaction costs to reflect the overall computational budget, including the dominant cost of model deliberation.
To encourage exploration (i.e., to let the agent query evidence before being penalized) while still accounting for compute,
we use an accumulated step-cost with a free ``buffer'' of $N_{\mathrm{buf}}$ steps:
\begin{equation}
\label{eq:cost_tau}
C(\tau)=\sum_{t=1}^{N} c_t,\; c_t=\mathbb{I}\big[t>N_{\mathrm{buf}}\big],
\end{equation}
where $N$ is the number of interaction steps and $N_{\mathrm{buf}}$ specifies how many initial steps are not penalized.
This removes the explicit weighting hyperparameters (e.g., $c_0,\kappa$) while still keeping a small exploration buffer.
As characterized in Theorem~\ref{thm:voi_termination}, the policy learns to execute \Final \  when the expected value of
information (VoI) from any further interaction falls below its corresponding cost, yielding rational stopping without an
explicit heuristic threshold.

\subsection{Policy Optimization via Interactive DPO (iDPO)}
\label{subsec:policy_learning}

Conventional supervised fine-tuning (SFT) maximizes likelihood on a single ``best'' trajectory, which can be brittle in a
large interactive space and may lead to overconfident, poorly calibrated stopping decisions.
To learn robust decision-making under varying evidence quality, we adopt \emph{interactive} Direct Preference Optimization
(iDPO), building on Direct Preference Optimization (DPO)~\cite{rafailov2023direct}, which injects contrastive signals directly in the space of closed-loop trajectories.

For each clip episode $c_i$, we perform $n$ independent rollouts under the current policy $\pi_\theta$ to obtain a set of
candidate interaction trajectories $\{\tau_1,\dots,\tau_n\}$.
Due to stochastic sampling, these trajectories differ in their evidence-query paths, deliberation depth, and termination
time.
We score each trajectory with the AEI utility $U(\tau)$ (Eq.~\eqref{eq:aei_utility}), which integrates task success
(e.g., decision accuracy $r_{\text{acc}}$ and format validity $r_{\text{fmt}}$), intrinsic inquiry benefits
(e.g., information gain $r_{\text{inf}}$ and belief refinement $r_{\text{hyp}}$), and interaction efficiency via the cost
term $C(\tau)$.
We then rank the rollouts by $U(\tau)$ and extract preference pairs $(\tau^+,\tau^-)$ such that
$U(\tau^+) > U(\tau^-)$.
Let $\pi_0$ denote a fixed reference policy (e.g., the SFT initialization).
Rather than training an explicit reward model, iDPO directly optimizes the trajectory likelihood ratio via the DPO loss
\begin{equation}
\label{eq:idpo}
\mathcal{L}_{\mathrm{iDPO}}(\theta)
= -\mathbb{E}_{(\tau^+,\tau^-)}\Big[\log\sigma\big(\beta\,(s_\theta(\tau^+)-s_\theta(\tau^-))\big)\Big],
\end{equation}
where $s_\theta(\tau)=\log\tfrac{\pi_\theta(\tau)}{\pi_0(\tau)}$, $\sigma(\cdot)$ is the logistic sigmoid, and $\beta$
controls the preference sharpness. For an interaction trajectory $\tau$, the policy likelihood decomposes as
$\pi_\theta(\tau)=\prod_{t=1}^{N} \pi_\theta(a_t\mid \tau_t)$, so minimizing Eq.~\eqref{eq:idpo} directly updates the
per-step query/stop decisions.

The resulting push-pull signals reinforce trajectories that acquire discriminative evidence with fewer queries and
terminate rationally, while suppressing redundant queries and premature stopping.
The ratio to $\pi_0$ acts as an implicit KL regularizer, anchoring updates and mitigating distributional shift from the
reference policy.
By contrasting $\tau^+$ and $\tau^-$, iDPO increases the separability between preferred and dispreferred interaction
trajectories, echoing the theoretical results in Section~\ref{sec:theory}.

\section{Theoretical Analysis}
\label{sec:theory}

This section provides theoretical justification for closed-loop video anomaly understanding: (i) open-loop observation admits an
irreducible ambiguity; (ii) interaction can accumulate discriminability across rounds; and (iii) termination follows a
value-of-information (VoI) tradeoff. Additional discussion and derivations are deferred to Appendix~\ref{app:theory_discuss}.

\textbf{Setup.}
Let the latent explanation be $\theta\in\Theta$ and the video process be $X\sim p_\theta$.
At round $t$, the agent selects an action $a_t\in\mathcal{A}$ and receives an observation $o_t$, updating the history as
$h_t=h_{t-1}\oplus(a_t,o_t)$. The termination action $\FinalPred{\hat y}$ ends the episode.

Open-loop methods fix an observation operator $g$ and perform inference only from $Y=g(X)$.
We denote the induced distribution of $Y$ under $X\sim p_\theta$ by $p_\theta^{g}$.

\begin{theorem}[Open-loop Barrier]
  \label{thm:openloop_barrier}
  If there exist $\theta\neq\theta'$ such that
  $D_{\mathrm{TV}}\!\big(p_\theta^{g},p_{\theta'}^{g}\big)\le\varepsilon$, then for any estimator
  $\hat\theta(Y)$ depending only on $Y$, the worst-case error obeys
  $\inf_{\hat\theta}\sup_{\vartheta\in\{\theta,\theta'\}}\Pr_\vartheta\!\big(\hat\theta(Y)\neq\vartheta\big)\ge \tfrac{1}{2}(1-\varepsilon)$.
\end{theorem}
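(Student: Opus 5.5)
This is Le Cam's two-point method. We lower-bound the minimax error by a Bayes risk under the uniform prior on $\{\theta,\theta'\}$, then lower-bound that Bayes risk by total variation.

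\textbf{Step 1 (reduce to an average).} Fix any estimator $\hat\theta(Y)$, possibly randomized. A maximum is at least the average, so
\begin{equation*}
\sup_{\vartheta\in\{\theta,\theta'\}}\Pr_\vartheta(\hat\theta(Y)\neq\vartheta)\;\ge\;\tfrac12\big[\Pr_\theta(\hat\theta\neq\theta)+\Pr_{\theta'}(\hat\theta\neq\theta')\big].
\end{equation*}

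\textbf{Step 2 (reduce to a test).} Let $A=\{y:\hat\theta(y)=\theta\}$, or, if $\hat\theta$ is randomized, let $\phi(y)\in[0,1]$ be the conditional probability of outputting $\theta$. Then $\Pr_\theta(\hat\theta\neq\theta)\ge 1-p_\theta^g(A)$. Also, the event $\{\hat\theta=\theta\}$ is contained in $\{\hat\theta\neq\theta'\}$ because $\theta\neq\theta'$, so $\Pr_{\theta'}(\hat\theta\neq\theta')\ge p_{\theta'}^g(A)$. Estimators taking values outside $\{\theta,\theta'\}$ only increase both errors, so these inequalities still hold.

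\textbf{Step 3 (apply total variation).} Combining the two bounds, the average error is at least
\begin{equation*}
\tfrac12\big[1-(p_\theta^g(A)-p_{\theta'}^g(A))\big]\ge\tfrac12\big[1-D_{\mathrm{TV}}(p_\theta^g,p_{\theta'}^g)\big]\ge\tfrac12(1-\varepsilon).
\end{equation*}
The middle inequality uses $D_{\mathrm{TV}}(P,Q)=\sup_A|P(A)-Q(A)|$; in the randomized case use $\sup_{0\le\phi\le1}|\int\phi\,d(P-Q)|$, which equals the same quantity. The bound holds for every $\hat\theta$, so taking the infimum gives the claim.

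\textbf{Main obstacle.} The argument is standard; the only care needed is measure-theoretic bookkeeping. One must ensure $\hat\theta$ is a measurable function of $Y$ alone (or a Markov kernel), so that both error probabilities are computed under the pushforwards $p_\theta^g$ and $p_{\theta'}^g$. This is precisely where the open-loop assumption enters: because $g$ is fixed, no adaptively chosen observation can separate the two laws.
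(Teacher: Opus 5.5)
Your proposal is correct and takes the same route as the paper. The paper's proof simply invokes Le Cam's two-point method to obtain $\inf_{\hat\theta}\sup_{\vartheta}\Pr_\vartheta(\hat\theta(Y)\neq\vartheta)\ge\tfrac12\bigl(1-D_{\mathrm{TV}}(p_\theta^g,p_{\theta'}^g)\bigr)\ge\tfrac12(1-\varepsilon)$; you unpack that lemma explicitly (max $\ge$ average, reduction to a test via $A=\{\hat\theta=\theta\}$, the variational form of total variation, and the randomized-estimator case), all of which is sound.
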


\begin{proof}[Proof sketch]
By Le Cam's two-point method.
This theorem implies an irreducible error whenever discriminative cues are not present in the fixed view $Y$.
\end{proof}

In closed-loop interaction, future actions depend on intermediate history.
Let $\tau=(a_{1:H},o_{1:H})$ be the interaction trajectory under policy $\pi$, with induced distribution $P_\theta^\pi$.
Here $H$ denotes the number of interaction rounds, and $o_t$ represents the returned evidence/observation at round $t$.

\begin{assumption}[Per-round Discriminability Gain]
\label{assump:per_round_gain}
Let
$\Delta_t(h_{t-1},a)=D_{\mathrm{KL}}\!\big(p_\theta(\cdot\mid a,h_{t-1})\,\|\,p_{\theta'}(\cdot\mid a,h_{t-1})\big)$.
There exist a policy $\pi$ and $c>0$ such that
$\mathbb{E}_{h_{t-1},a_t\sim P_\theta^\pi}\![\Delta_t(h_{t-1},a_t)]\ge c$ for each round $t$.
\noindent\textup{This is an existential assumption: the gain is maintained by an informative policy that selects
high-information actions in ambiguous histories, rather than by arbitrary interaction.}
\end{assumption}
Intuitively, if a policy assigns at least probability $\rho$ to informative actions with conditional KL gap at least
$\delta$, the expected per-round gain is at least $\rho\delta$; AEI/iDPO serve as empirical proxies that encourage such
action selection, rather than a formal guarantee (Appendix~\ref{app:effective_policy_condition}).

\begin{theorem}[Exponential Error Decay]
\label{thm:interaction_exp_decay}
Under Assumption~\ref{assump:per_round_gain},
$D_{\mathrm{KL}}\!\big(P_\theta^\pi\,\|\,P_{\theta'}^\pi\big)\ge cH$, which implies that the Bayes discrimination error decays
exponentially in $H$ (up to constants in the exponent).
\end{theorem}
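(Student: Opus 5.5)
The plan is to apply the chain rule for KL divergence to the trajectory law, and then pass from the KL lower bound to an error bound through a standard two-point inequality. For a fixed policy $\pi$, the density of a trajectory $\tau=(a_{1:H},o_{1:H})$ under $\theta$ factorizes as
\[
P_\theta^\pi(\tau)=\prod_{t=1}^{H}\pi(a_t\mid h_{t-1})\,p_\theta(o_t\mid a_t,h_{t-1}).
\]
The action kernels $\pi(a_t\mid h_{t-1})$ do not depend on the hypothesis, so they are identical under $\theta$ and $\theta'$ and cancel in the log-likelihood ratio. This gives
\[
\log\frac{P_\theta^\pi(\tau)}{P_{\theta'}^\pi(\tau)}=\sum_{t=1}^{H}\log\frac{p_\theta(o_t\mid a_t,h_{t-1})}{p_{\theta'}(o_t\mid a_t,h_{t-1})}.
\]
Taking the expectation under $P_\theta^\pi$ and conditioning each term on $(h_{t-1},a_t)$ (the tower property), the $t$-th term equals $\mathbb{E}_{h_{t-1},a_t\sim P_\theta^\pi}[\Delta_t(h_{t-1},a_t)]$. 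This is the chain rule for KL, with the policy contributing zero divergence. Summing and applying Assumption~\ref{assump:per_round_gain} round by round yields $D_{\mathrm{KL}}(P_\theta^\pi\|P_{\theta'}^\pi)\ge cH$.

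\textbf{Main obstacle.} The delicate step is converting this KL lower bound into a statement about the Bayes discrimination error. A KL lower bound alone does not control the error: large KL is compatible with TV staying bounded away from $1$, for instance when a rare event carries huge likelihood ratios. I therefore expect to need a mild regularity condition to make the exponential-decay claim rigorous. The route I would try first is a Chernoff/Bhattacharyya bound. For equal priors, the Bayes error satisfies
\[
P_e\le\tfrac12\int\sqrt{P_\theta^\pi P_{\theta'}^\pi}=\tfrac12\,e^{-D_{1/2}(P_\theta^\pi\|P_{\theta'}^\pi)/2},
\]
where $D_{1/2}$ is the order-$1/2$ Rényi divergence. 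Rényi divergence also satisfies a chain rule along adaptive trajectories, in the sense of the conditional Rényi/Hellinger decomposition. So if the per-round log-likelihood ratios have bounded range or bounded variance, each round's $D_{1/2}$ is comparable to its KL, i.e.\ $\ge c'c$ for some constant $c'$. This gives $P_e\le\tfrac12 e^{-c''H}$. The phrase ``up to constants in the exponent'' in the statement is meant to absorb exactly this comparison constant.

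As a fallback I would use a martingale argument. The cumulative log-likelihood ratio $\Lambda_H=\sum_t \ell_t$ has conditional increments with mean $\ge c$ in aggregate. Under bounded increments $|\ell_t|\le B$, Azuma–Hoeffding gives $\Pr_\theta(\Lambda_H\le 0)\le \exp(-c^2H/(2B^2))$ for the likelihood-ratio test. The symmetric bound under $\theta'$ follows by the same argument applied with the roles of the hypotheses swapped, which needs the assumption to hold in both directions (I would add this symmetrically). Either route gives exponential decay in $H$, in contrast with the constant floor $\tfrac12(1-\varepsilon)$ of Theorem~\ref{thm:openloop_barrier}.
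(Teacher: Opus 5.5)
\textbf{There is a genuine gap in your second half: your repair (bounded per-round log-likelihood ratios) does not close it.}

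Your first half matches the paper's proof and correctly gives $D_{\mathrm{KL}}(P_\theta^\pi\|P_{\theta'}^\pi)\ge cH$: factorize $P_\theta^\pi$, cancel the shared policy factors, apply the KL chain rule, and sum the per-round bounds.

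For the error claim, the paper cites Bretagnolle--Huber as $P_e^\pi\le\tfrac12 e^{-D_{\mathrm{KL}}/2}$. That inequality actually runs the other way ($P_e\ge\tfrac14 e^{-D_{\mathrm{KL}}}$). So your diagnosis is right: a KL lower bound alone cannot control the Bayes error, and the paper's proof does not close this step.

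Your repair fails because Assumption~\ref{assump:per_round_gain} bounds $\Delta_t(h_{t-1},a_t)$ from below only \emph{on average over histories} under $P_\theta^\pi$. Both of your routes need a bound that holds history by history.
\begin{itemize}
\item In the Bhattacharyya route, unrolling $\int\sqrt{P_\theta^\pi P_{\theta'}^\pi}$ backwards weights each history by $\prod_t\pi(a_t\mid h_{t-1})\sqrt{p_\theta(o_t\mid a_t,h_{t-1})\,p_{\theta'}(o_t\mid a_t,h_{t-1})}$. The conditional R\'enyi terms therefore enter through a log-mean-exp under a tilted measure, not linearly under $P_\theta^\pi$. Your step ``each round's $D_{1/2}$ is $\ge c'c$'' has nothing to stand on.
\item In the Azuma--Hoeffding route, the inequality concentrates $\Lambda_H$ around its random compensator $\sum_t\Delta_t(h_{t-1},a_t)$. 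That compensator must be $\ge cH$ almost surely, not merely in mean.
\end{itemize}

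In fact the exponential-decay implication is false under the stated assumption, even with bounded ratios. Take a single action. Let round one reveal a fair coin $b$ (same law under both hypotheses) together with a $\mathrm{Bern}(0.6)$-versus-$\mathrm{Bern}(0.4)$ draw. Let every later observation be $\mathrm{Bern}(0.6)$ versus $\mathrm{Bern}(0.4)$ if $b=1$, but $\mathrm{Bern}(1/2)$ under both hypotheses if $b=0$. Then:
\begin{itemize}
\item the assumption holds with $c=\delta/2$, where $\delta=D_{\mathrm{KL}}(\mathrm{Bern}(0.6)\|\mathrm{Bern}(0.4))$;
\item all log-likelihood ratios are bounded by $\log 1.5$, and $D_{\mathrm{KL}}\ge cH$;
\item yet on $\{b=0\}$, which has probability $1/2$ under both hypotheses, the two trajectory laws differ only through round one, so the Bayes error is at least $1/5$ for every $H$.
\end{itemize}

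The missing ingredient is a per-history version of the assumption, e.g.\ $\sum_a\pi(a\mid h_{t-1})\,\Delta_t(h_{t-1},a)\ge c$ for $P_\theta^\pi$-a.e.\ $h_{t-1}$. This is what the sufficient condition in Appendix~\ref{app:effective_policy_condition} actually provides. Combine it with your bounded-ratio condition. Then the per-history squared Hellinger distance is at least a constant multiple of the per-history KL, so each round contracts the Bhattacharyya coefficient by a factor $1-\kappa<1$. Backward induction gives $\int\sqrt{P_\theta^\pi P_{\theta'}^\pi}\le(1-\kappa)^H$, hence $P_e\le\tfrac12(1-\kappa)^H$. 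Your Azuma route also goes through under this strengthening.
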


\begin{proof}[Proof sketch]
Factorize
$P_\theta^\pi(\tau)=\prod_{t=1}^{H}\pi(a_t\mid h_{t-1})\,\allowbreak p_\theta(o_t\mid a_t,h_{t-1})$ and note that policy terms cancel in the
log-likelihood ratio.
A KL chain rule decomposes the trajectory KL into a sum of per-round conditional KL terms, each lower bounded by $c$.
A standard KL-to-testing bound yields exponential decay.
\end{proof}

Let $L_{\mathrm{stop}}(h)=\min_{\hat y}\mathbb{E}[\ell(\hat y)\mid h]$ be the minimal expected loss of stopping now, where
$\ell$ is a task loss, and let $c(a)$ be the interaction cost.
Define
$\mathrm{VoI}(h,a)=L_{\mathrm{stop}}(h)-\mathbb{E}[L_{\mathrm{stop}}(h\oplus(a,o))\mid h,a]$.

\begin{theorem}[VoI-Cost Tradeoff]
\label{thm:voi_termination}
The optimal policy terminates if and only if
$\max_{a\in\mathcal{A}_{\mathrm{inq}}}\ \mathrm{VoI}(h,a)\le\min_{a\in\mathcal{A}_{\mathrm{inq}}} c(a)$.
\end{theorem}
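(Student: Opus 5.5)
The plan is to read the statement as a Bellman optimality condition for an optimal-stopping problem on the history process $h$. I will show that the optimal policy terminates exactly when no single inquiry improves the expected total cost. Concretely, define the optimal cost-to-go
\[
J^\star(h)=\min\Big\{L_{\mathrm{stop}}(h),\ \min_{a\in\mathcal{A}_{\mathrm{inq}}}\big(c(a)+\mathbb{E}[J^\star(h\oplus(a,o))\mid h,a]\big)\Big\},
\]
where the first branch is \Final{} with the Bayes-optimal $\hat y$ and the second is any investigative action. The optimal policy stops at $h$ if and only if the first branch attains the minimum. To make $J^\star$ well defined I will use a finite horizon, or equivalently a bounded number of rounds $H$ as in Theorem~\ref{thm:interaction_exp_decay}, and backward induction.

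The key step is a one-step-lookahead reduction: continuation values are replaced by stopping values, $J^\star(h\oplus(a,o))=L_{\mathrm{stop}}(h\oplus(a,o))$.
\begin{itemize}
\item This holds exactly in the last round. It also holds whenever the problem is \emph{monotone}, meaning the set of histories at which stopping is one-step optimal is closed under further observation, which is the standard condition for the myopic rule to be optimal.
\item Under this reduction, stopping is optimal iff $L_{\mathrm{stop}}(h)\le c(a)+\mathbb{E}[L_{\mathrm{stop}}(h\oplus(a,o))\mid h,a]$ for every $a\in\mathcal{A}_{\mathrm{inq}}$.
\item By the definition of $\mathrm{VoI}$, this is $\mathrm{VoI}(h,a)\le c(a)$ for all $a$, i.e. $\max_a[\mathrm{VoI}(h,a)-c(a)]\le 0$.
\end{itemize}
I will verify monotonicity from two facts. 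First, $L_{\mathrm{stop}}$ is concave in the posterior, so $\mathrm{VoI}\ge 0$ by Jensen's inequality. Second, I will assume that VoI is non-increasing along observation paths (diminishing returns), which is the natural formalization of the ``diminishing information value'' intuition in Section~\ref{subsec:actions_closed_loop}.

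To pass from $\max_a[\mathrm{VoI}(h,a)-c(a)]\le0$ to the displayed form $\max_a\mathrm{VoI}(h,a)\le\min_a c(a)$, I will use that the cost in Eq.~\eqref{eq:cost_tau} is action-independent: $c_t=\mathbb{I}[t>N_{\mathrm{buf}}]$ is the same for every inquiry at a given step. With $c(a)\equiv c$, both sides reduce to $\max_a \mathrm{VoI}(h,a)\le c$, which gives both directions of the ``if and only if''. For heterogeneous costs, only the ``if'' direction survives: the displayed condition implies $\mathrm{VoI}(h,a)\le c(a)$ for all $a$, so stopping is optimal. I will state this caveat explicitly rather than claim the stronger form.

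The main obstacle is the one-step-lookahead reduction. In general, a sequence of two individually unprofitable queries could be jointly profitable, so that $J^\star(h\oplus(a,o))<L_{\mathrm{stop}}(h\oplus(a,o))$ and the myopic test is not necessary for optimal stopping. I would first try to prove optimality of the myopic rule by induction on the remaining horizon, assuming diminishing returns of VoI. If that assumption cannot be justified, I would restate the theorem for the one-step (myopic) optimal policy, where the equivalence holds exactly with no further assumptions.
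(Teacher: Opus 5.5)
Your central step is the same as the paper's. Its appendix proof compares $L_{\mathrm{stop}}(h)$ with $L_{\mathrm{ask}}(h,a)=c(a)+\mathbb{E}[L_{\mathrm{stop}}(h\oplus(a,o))\mid h,a]$ and declares the resulting one-step decision to be the VoI criterion. The paper stops there. It silently treats the one-step-lookahead rule as the optimal policy. It also silently equates the per-action test ($\mathrm{VoI}(h,a)\le c(a)$ for all $a$) with the displayed $\max_a\mathrm{VoI}(h,a)\le\min_a c(a)$.

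You instead embed the comparison in the full Bellman recursion for $J^\star$ and make both hypotheses explicit:
\begin{itemize}
\item finite horizon plus monotonicity;
\item an action-independent cost. The paper's $c_t=\mathbb{I}[t>N_{\mathrm{buf}}]$ is action-independent, and it is also nondecreasing in $t$, which your monotonicity argument needs.
\end{itemize}
So the paper's argument really establishes the statement only for the myopic rule, while yours proves a correctly hypothesized version for the optimal policy. The price is your diminishing-returns assumption, which is essentially monotonicity itself. The Jensen fact $\mathrm{VoI}\ge 0$ plays no role in it.

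One correction sharpens your account: the obstacle you name is a failure of \emph{sufficiency}, not necessity. Since $J^\star\le L_{\mathrm{stop}}$ pointwise, if the optimal policy stops at $h$, then for every $a\in\mathcal{A}_{\mathrm{inq}}$
\[
L_{\mathrm{stop}}(h)\le c(a)+\mathbb{E}[J^\star(h\oplus(a,o))\mid h,a]\le c(a)+\mathbb{E}[L_{\mathrm{stop}}(h\oplus(a,o))\mid h,a],
\]
i.e.\ $\mathrm{VoI}(h,a)\le c(a)$, with no monotonicity needed. Only the converse (VoI test passed $\Rightarrow$ stopping optimal) can fail when two individually unprofitable queries are jointly profitable. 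This has three consequences:
\begin{itemize}
\item With uniform costs, the ``only if'' half of the theorem holds unconditionally, and monotonicity is needed only for the ``if'' half.
\item In that ``if'' half, the replacement $J^\star(h\oplus(a,o))=L_{\mathrm{stop}}(h\oplus(a,o))$ is needed, and true by backward induction, only for $h$ in the one-step stopping set, not for all $h$.
\item With heterogeneous costs and no monotonicity, neither half of the max/min form holds in general. So your caveat that ``only the `if' direction survives'' is correct only under your monotonicity reduction.
\end{itemize}
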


\begin{proof}[Proof sketch]
View termination as an optimal stopping problem and compare stopping now versus one-step lookahead
interaction via Bellman optimality.
\end{proof}

The intrinsic curiosity drive in our AEI reward can be viewed as a proxy for VoI, encouraging evidence acquisition in
histories where the expected information value is high, thus aligning exploration with rational termination. Complete proofs
are provided in Appendix~\ref{app:theory_proofs}.


\section{Experiments}

\begin{table*}[t]
\centering
\caption{Frame-level performance (AUC/AP) on UCF-Crime, XD-Violence, UBnormal, and CSAD.
Expl. indicates whether the method outputs an explanation.
Reas. indicates whether explicit reasoning traces are used.
Lrn. indicates whether a learned policy is used.
Obs. indicates the evidence acquisition setting.
An asterisk ($^*$) indicates a closed-source commercial model.}
\label{tab:sota_frame}
\setlength{\tabcolsep}{6pt}
\renewcommand{\arraystretch}{1.15}
\footnotesize
\resizebox{0.98\textwidth}{!}{
\begin{tabular}{l|c|c|c|c|c||cc||c||c}
\toprule
\multirow{2}{*}{\textbf{Methods}} &
\multirow{2}{*}{\textbf{\#Params}} &
\multirow{2}{*}{\textbf{Expl.}} &
\multirow{2}{*}{\textbf{Reas.}} &
\multirow{2}{*}{\textbf{Lrn.}} &
\multirow{2}{*}{\textbf{Obs.}} &
\multicolumn{2}{c||}{\textbf{Multi-Scenario}} &
\multicolumn{1}{c||}{\textbf{Open-Set}} &
\multicolumn{1}{c}{\textbf{Complex Scenario}} \\
\cline{7-10}
& & & & & &
\textbf{UCF (AUC\%)} & \textbf{XD (AP\%)} &
\textbf{UB (AUC\%)} & \textbf{CSAD (AUC\%)} \\
\midrule
\rowcolor{gray!12}
\multicolumn{10}{l}{\textit{Passive Specialized Learning}}\\
%

UR-DMU~\cite{zhou2023dual}~\conf{AAAI'23} & - & \xmark & \xmark & - & \xmark & 86.97 & 81.66 & 59.91 & - \\

AED-MAD~\cite{ristea2024self}~\conf{CVPR'24}  & -  & \xmark & \xmark & - & \xmark & -     & -     & 58.50 & - \\

VadCLIP~\cite{wu2024vadclip}~\conf{AAAI'24}       & -          & \xmark & \xmark & - & \xmark & 88.02 & 84.51 & -     & - \\
Ex-VAD~\cite{huangex}~\conf{ICML'24}              & -          & \cmark & \xmark & - & \xmark & 88.29 & 86.52 & -     & - \\
$\pi$-VAD~\cite{majhi2025just}~\conf{CVPR'25}              & -          & \xmark & \xmark & - & \xmark & 90.33 & 85.37 & -     & - \\
\midrule
\rowcolor{gray!12}
\multicolumn{10}{l}{\textit{Passive General Understanding}}\\
ZS CLIP~\cite{radford2021learning}~\conf{ICML'21} & 0.3B & \cmark & \xmark & \xmark & Fixed & 53.16 & 17.83 & 46.20 & 32.45 \\
LLaVA-1.5~\cite{liu2024improved}~\conf{CVPR'24}   & 13B & \cmark & \xmark & \xmark & Fixed & 72.84 & 50.26 & 53.71 & 47.78 \\
LAVAD~\cite{zanella2024harnessing}~\conf{CVPR'24} & 13B & \cmark & \xmark & \xmark & Fixed  & 80.28 & 62.01 & 64.23 & 57.26 \\
AnomalyRuler~\cite{yang2024follow}~\conf{ECCV'24} & 17B$^*$ & \cmark & \xmark & \cmark & Fixed & -     & -     & 71.90 & - \\

VERA~\cite{ye2025vera}~\conf{CVPR'25}  & 8B  & \cmark & \xmark & \cmark & Fixed & \textbf{86.55} & 70.11 & -     & - \\
URF~\cite{lin2026unified}~\conf{NeurIPS'25}       & 7B & \cmark & \xmark & \xmark & Fixed & 84.28 & 68.07 & 69.02 & - \\

EventVAD~\cite{shao2025eventvad}~\conf{ACM MM'25} & 7B & \cmark & \xmark & \xmark & Dyn. & 82.03 & 64.04 & -     & - \\
VADTree~\cite{li2026vadtree}~\conf{NeurIPS'25}    & 8B & \cmark & \xmark & \xmark & Dyn. & 84.74 & 68.85 & -     & - \\
PANDA~\cite{yang2026panda}~\conf{NeurIPS'25}      & 72B$^*$ & \cmark & \cmark & \xmark &  Dyn. & \underline{84.89} & \underline{70.16} & \underline{75.78} & \underline{73.12} \\
\midrule
\rowcolor{gray!12}
\multicolumn{10}{l}{\textit{Active Exploratory Reasoning}}\\

\textbf{Anom-$\pi$ (ours)} & 2B & \cmark & \cmark & \cmark & \textbf{OnD.}  &
84.46 & \textbf{72.29} & \textbf{78.75} & \textbf{79.18} \\

\bottomrule
\end{tabular}}
\end{table*}

\subsection{Experimental Setup}
\label{subsec:datasets_protocol}

\textbf{Datasets and evaluation.}
We evaluate Anom-$\pi$ on four benchmarks covering multi-scenario anomaly detection (XD-Violence~\cite{wu2020not}, UCF-Crime~\cite{sultani2018real}), open-set generalization (UBnormal~\cite{acsintoae2022ubnormal}), and complex scenarios (CSAD~\cite{yang2026panda}).
We report frame-level ROC-AUC (\%) on UCF-Crime/UBnormal/CSAD and frame-level AP(\%) on XD-Violence.
For each clip episode, Anom-$\pi$ outputs the predicted anomaly category, temporal range, and confidence.
For frame-level scoring, clips predicted as normal assign score $0$ to all frames; clips predicted as anomalous assign
score $1$ inside the predicted anomalous range(s) and score $0.5$ to the remaining frames in the same clip.
When multiple clips overlap on the same video frame, we use the mean of their local scores as the final frame-level anomaly score.

\begin{table}[b]
\centering
\caption{Efficiency and paradigm comparison (Vid./Frm.: video-/frame-level on XD-Violence).}
\label{tab:efficiency_paradigm}
\setlength{\tabcolsep}{5pt}
\renewcommand{\arraystretch}{1.12}
\footnotesize
\adjustbox{max width=0.9\linewidth}{
\begin{tabular}{>{\centering\arraybackslash}m{2.2cm}|c|c|c|c|c}
\toprule
\multirow{2}{*}{\textbf{Method}} & \multirow{2}{*}{\textbf{\#Params}} & \multirow{2}{*}{\textbf{Data}} & \multirow{2}{*}{\textbf{Context}} & \multicolumn{2}{c}{\textbf{XD-V/AP(\%)}} \\
\cmidrule(lr){5-6}
& & & & \textbf{Vid.} & \textbf{Frm.} \\
\midrule
Holmes-VAU & 2B & 70k  & 16 & 87.68 & -- \\
VERA & 8B & $\sim$3.5k & 300 & -- & 70.11 \\
\rowcolor{gray!12}
\textbf{Anom-$\pi$ (ours)} & 2B & $\sim$3.5k & 16--32 & \textbf{91.31} & \textbf{72.29} \\
\bottomrule
\end{tabular}
}
\end{table}

\textbf{Implementation details.}
Following prior work~\cite{zanella2024harnessing,ye2025vera}, we uniformly subsample the raw video by taking one frame every 16 frames.
We then form overlapping clips on the subsampled frame sequence with clip length $L{=}16$ and stride $\Delta{=}8$, both
measured in subsampled frames, corresponding to a 50\% overlap between adjacent clips.
At inference time, Anom-$\pi$ starts from 16 frames and can request at most 16 additional frames on demand.
During deliberation (\Think), the model maintains up to three hypotheses $(\hat{y}_{i,m},\hat{b}_{i,m})$ (anomaly category and temporal range within the clip).
We use Qwen3-VL-Instruct-2B~\cite{yang2025qwen3} as the backbone.
Our agent operates over a unified action space that interleaves deliberation (\Think) with on-demand evidence inquiry actions (\Backtrack, \Expand, \Sample), followed by termination (\Final).
Unless otherwise stated, we use the same backbone across all ablations and only vary the available action set, whether iterative evidence update is allowed, and the policy learning objective. (Sec.~\ref{sec:policy_learning_ablation})
All experiments are conducted on two NVIDIA H800 GPUs using PyTorch.

\subsection{Main Results}

Table~\ref{tab:sota_frame} summarizes frame-level performance on four benchmarks.
On XD-Violence, Anom-$\pi$ achieves 72.29\% AP and outperforms PANDA~\cite{yang2026panda} by 2.13\% while using a much smaller backbone.
On UBnormal, which emphasizes open-set generalization, Anom-$\pi$ reaches 78.75\% AUC and improves over UR-DMU~\cite{zhou2023dual} by 18.84\% and PANDA~\cite{yang2026panda} by 2.97\%.
On CSAD, a complex-scenario benchmark, Anom-$\pi$ attains 79.18\% AUC and exceeds PANDA~\cite{yang2026panda} by 6.06\%.
These results show that active on-demand evidence acquisition can compensate for smaller model scale, improve robustness under distribution shift, and support complex-scenario anomaly understanding.

These gains support our view of video anomaly understanding as closed-loop hypothesis verification.
With only video-level labels, iDPO learns when to query, revise, and stop, improving evidence localization without dense temporal supervision.
On UCF-Crime, Anom-$\pi$ remains competitive with 84.46\% AUC, although the smaller gain may reflect domain-specific scene patterns that are underrepresented in our training rollouts.
The class-wise results in Appendix~\ref{app:ucf_classwise} further show that interaction is most helpful for localized and subtle categories, where decisive cues are easy to miss under a fixed initial view.

\subsection{Efficiency and Paradigm Comparison}
\label{subsec:efficiency_paradigm}

We summarize efficiency and paradigm differences in model/data scale and inference-time context, and relate them to localization performance.
We compare against recent LLM-based baselines (e.g., Holmes-VAU~\cite{zhang2025holmes} and VERA~\cite{ye2025vera}) in terms of backbone scale, weakly-supervised training data, and inference-time context budget.
Holmes-VAU relies on a newly curated anomaly-understanding instruction dataset with roughly 70k annotated training samples.
In contrast, like VERA, Anom-$\pi$ is trained directly on existing benchmarks using only their standard video-level labels (i.e., without extra manual annotation).
Table~\ref{tab:efficiency_paradigm} shows that VERA uses 300 frames, while Anom-$\pi$ uses only 16-32 frames.
Despite the much smaller context, Anom-$\pi$ improves frame-level AP on XD-Violence from 70.11\% (VERA) to 72.29\%, and also achieves higher video-level AP (91.31\% vs. 87.68\% of Holmes-VAU).

\subsection{Ablation Studies}
\label{subsec:ablations}

In this section, we conduct extensive ablation studies to verify the effectiveness of our active policy design and the necessity of each component in Anom-$\pi$.
All results are reported on the XD-Violence~\cite{wu2020not} benchmark.

\begin{table}[h]
  \centering
  \caption{Ablation on interaction for closed-loop vs. one-shot pipeline.}
  \label{tab:ablation_interaction}
  \setlength{\tabcolsep}{6pt}
  \resizebox{1.0\linewidth}{!}{%
  \begin{tabular}{lcc}
    \toprule
    \textbf{Method / Setting} & \textbf{AP(\%) $\uparrow$} & \textbf{AUC(\%) $\uparrow$} \\
    \midrule
    One-shot pipeline (random sampling) & 52.54 & 78.87 \\
    One-shot pipeline (uniform sampling) & 65.99 & 89.48 \\
    \rowcolor{gray!12}
    Active Interaction (Anom-$\pi$) & \textbf{72.29} & \textbf{93.16} \\
    \bottomrule
  \end{tabular}}
\end{table}

\textbf{Effectiveness of active interaction.}
\label{sec:closed_loop_ablation}
To test whether the gain comes from closed-loop evidence acquisition rather than simply observing more frames, we compare Anom-$\pi$ with two One-shot Pipeline baselines that use either random or uniform frame sampling.
Both baselines receive the initial 16 frames and 16 additional frames selected before reasoning starts, so they have a comparable frame budget but no intermediate hypothesis update.
Random sampling reaches only 52.54 AP(\%), indicating that sparse abnormal moments are easily missed when extra frames are not targeted.
Uniform sampling improves coverage and reaches 65.99 AP(\%), but it still trails Anom-$\pi$ at 72.29 AP(\%) because the model cannot revise a hypothesis and query the uncertain temporal region.
The comparison therefore isolates the value of the learned interaction policy, which decides where to query, when to revise, and when to stop under the same anomaly-understanding task.

\begin{table}[t]
  \centering
  \caption{Ablation on the policy learning objective.}
  \label{tab:ablation_policy}
  \setlength{\tabcolsep}{6pt}
  \resizebox{1.0\linewidth}{!}{%
  \begin{tabular}{lccc}
    \toprule
    \textbf{Method} & \textbf{Interaction Rate} & \textbf{AP(\%) $\uparrow$} & \textbf{AUC(\%) $\uparrow$} \\
    \midrule
    ZERO-SHOT & {\textcolor{white}{0}}5.38\% & 34.92 & 56.98 \\
    SFT (positive-only) & 40.10\% & 56.86 & 78.84 \\
    DPO (w/o curiosity) & 36.50\% & 66.57 & 91.14 \\
    \rowcolor{gray!12}
    iDPO (ours) & 11.74\% & \textbf{72.29} & \textbf{93.16} \\
    \bottomrule
  \end{tabular}}
\end{table}

\begin{table}[t]
  \centering
  \caption{Ablation on the action space (T: \textsc{Think}, B: \textsc{Backtrack}, E: \textsc{Expand}, S: \textsc{Sample}).}
  \label{tab:ablation_action}
  \setlength{\tabcolsep}{3pt}
  \resizebox{0.8\linewidth}{!}{%
  \begin{tabular}{l|cccc|cc}
    \toprule
    \textbf{Variant} & \textbf{T} & \textbf{B} & \textbf{E} & \textbf{S} & \textbf{AP(\%) $\uparrow$} & \textbf{AUC(\%) $\uparrow$} \\
    \midrule
    w/o \textsc{Think} & \xmark & \cmark & \cmark & \cmark & 65.54\,\textcolor{gray!70}{\scriptsize(-6.75)} & 89.66\,\textcolor{gray!70}{\scriptsize(-3.50)} \\
    w/o \textsc{Backtrack} & \cmark & \xmark & \cmark & \cmark & 70.49\,\textcolor{gray!70}{\scriptsize(-1.80)} & 92.74\,\textcolor{gray!70}{\scriptsize(-0.42)} \\
    w/o \textsc{Expand} & \cmark & \cmark & \xmark & \cmark & 71.48\,\textcolor{gray!70}{\scriptsize(-0.81)} & 92.04\,\textcolor{gray!70}{\scriptsize(-1.12)}\\
    w/o \textsc{Sample} & \cmark & \cmark & \cmark & \xmark & 67.83\,\textcolor{gray!70}{\scriptsize(-4.46)} & 91.93\,\textcolor{gray!70}{\scriptsize(-1.23)} \\
    \rowcolor{gray!12}
    FULL (Anom-$\pi$) & \cmark & \cmark & \cmark & \cmark & \textbf{72.29}\,\textcolor{gray!12}{\scriptsize(-0.00)} & \textbf{93.16}\,\textcolor{gray!12}{\scriptsize(-0.00)} \\
    \bottomrule
  \end{tabular}}
\end{table}

\begin{figure}[h]
  \centering
  \includegraphics[width=0.9\linewidth]{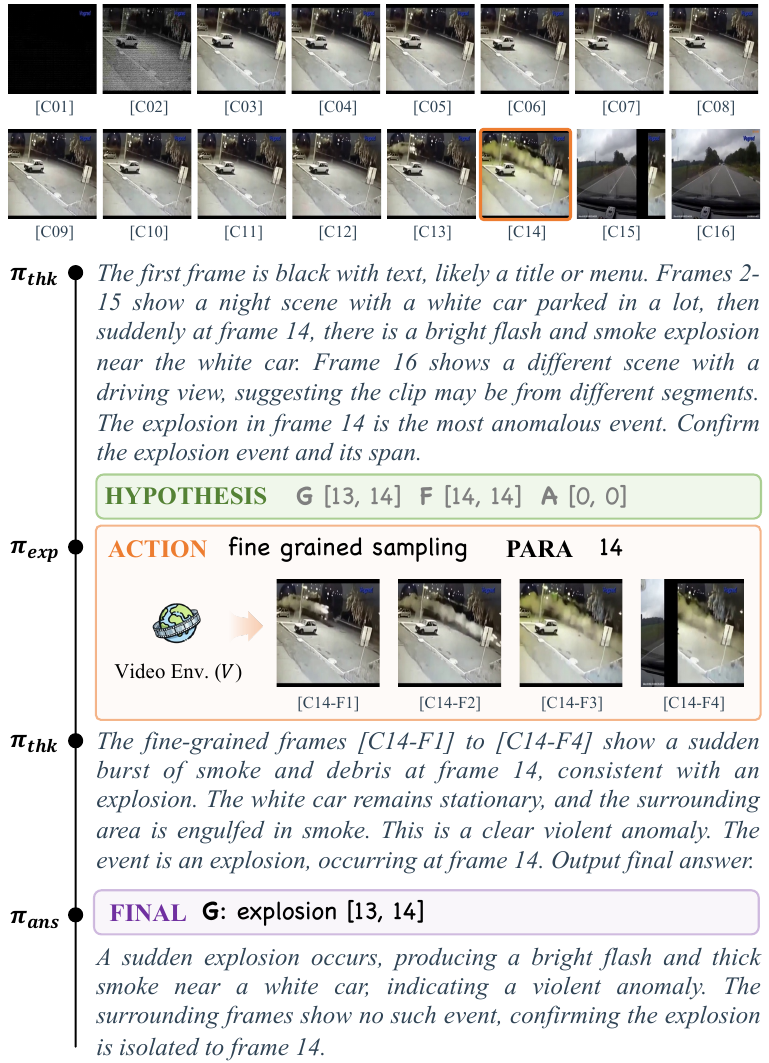}
  \caption{\textbf{Instantaneous anomaly case.} Anom-$\pi$ refines temporal boundaries by probing around a suspected moment and then terminates early.}
  \label{fig:inst_explosion}
\end{figure}

\textbf{Policy learning objective.}
\label{sec:policy_learning_ablation}
Under the same backbone and action space, policy learning must decide not only which anomaly label to predict, but also when additional evidence is worth acquiring and when the model should stop.
As shown in Table~\ref{tab:ablation_policy}, iDPO improves XD-Violence AP(\%) from 66.57 to 72.29 over vanilla DPO~\cite{rafailov2023direct} without AEI, indicating that trajectory preference alignment alone is insufficient for this interleaved decision process.
The gain comes from AEI explicitly valuing informative, non-redundant evidence acquisition while penalizing unnecessary interaction, which encourages the policy to query under uncertainty and terminate once evidence is sufficient.
A finer trajectory-construction ablation further shows why this cost-aware ranking is necessary.
Scoring trajectories only by the final prediction drops AP by 5.42 points, while removing the interaction cost still loses 2.25 points relative to full iDPO (Appendix~\ref{app:trajectory_protocol_stats}).
Detailed hyperparameter analyses are provided in Appendix~\ref{app:hparam_sensitivity}.

\textbf{Trajectory-level iDPO vs. GRPO.}
GRPO~\cite{guo2025deepseek} is a strong and complementary optimization family, and recent anomaly reasoning works further demonstrate the value of reinforcement-style reasoning optimization~\cite{huang2026vad,zhu2025vau,mo2026a2seek,li2026iad,yu2026cuebench,kang2026judo}.
The key difference lies in the supervision interface. In weakly supervised VAU, two interleaved trajectories can reach the same final video-level label while differing in evidence quality, redundancy, hypothesis verification, and stopping.
iDPO directly aligns AEI-ranked chosen/rejected trajectories, thus optimizing the behavior we care about, namely well-justified evidence acquisition and rational termination, rather than only the final label.
Applying GRPO to these long rollouts would require converting trajectory-level scores into scalar rewards and group-relative advantages, which is feasible but more sensitive to reward noise and credit assignment.
It is also more expensive in our setting. On the same H800 setup, a GRPO-style rollout optimization requires about 140 GB peak VRAM and over 34 hours per epoch, compared with about 54 GB and 2 hours for iDPO.
We therefore view GRPO as complementary rather than inferior, and leave GRPO training with richer rewards as future work.

\begin{figure*}[t]
  \centering
  \begin{subfigure}[t]{0.32\textwidth}
    \centering
    \includegraphics[width=\linewidth]{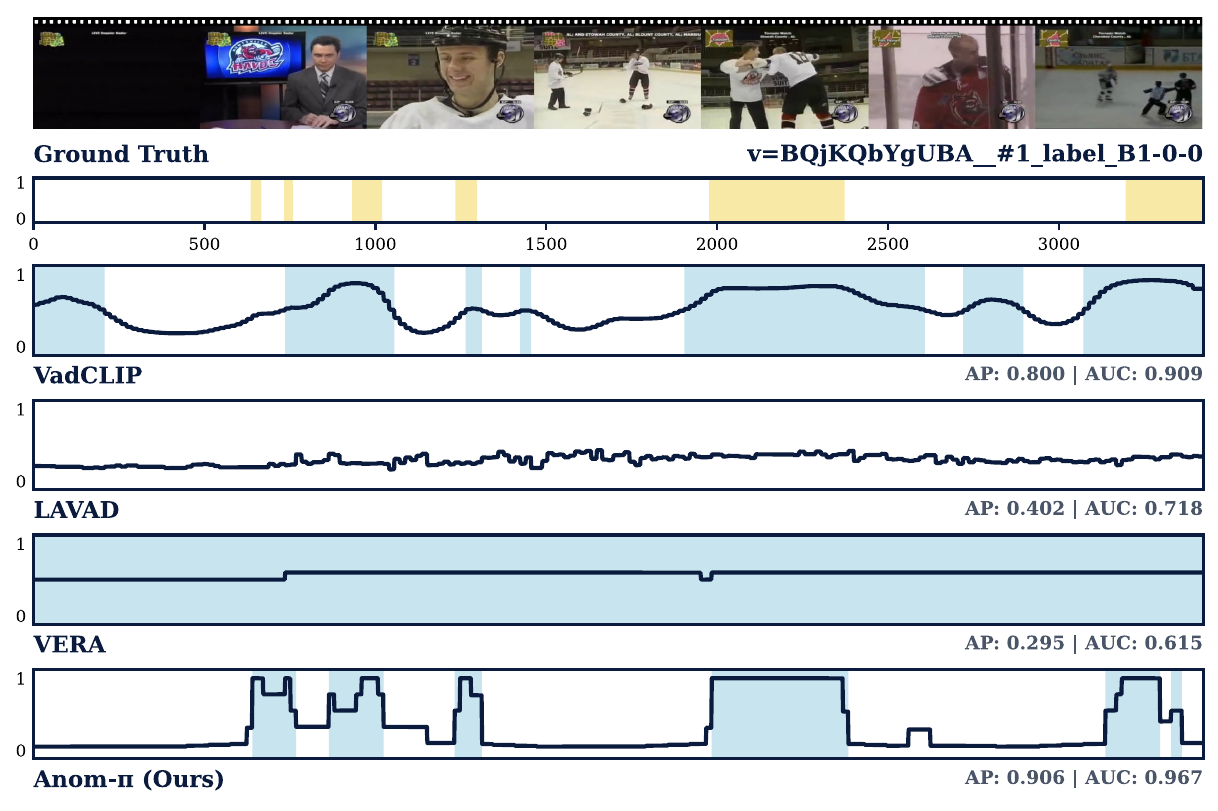}
    \caption{Sports Broadcast}
    \label{fig:curve_vis_a}
  \end{subfigure}\hfill
  \begin{subfigure}[t]{0.32\textwidth}
    \centering
    \includegraphics[width=\linewidth]{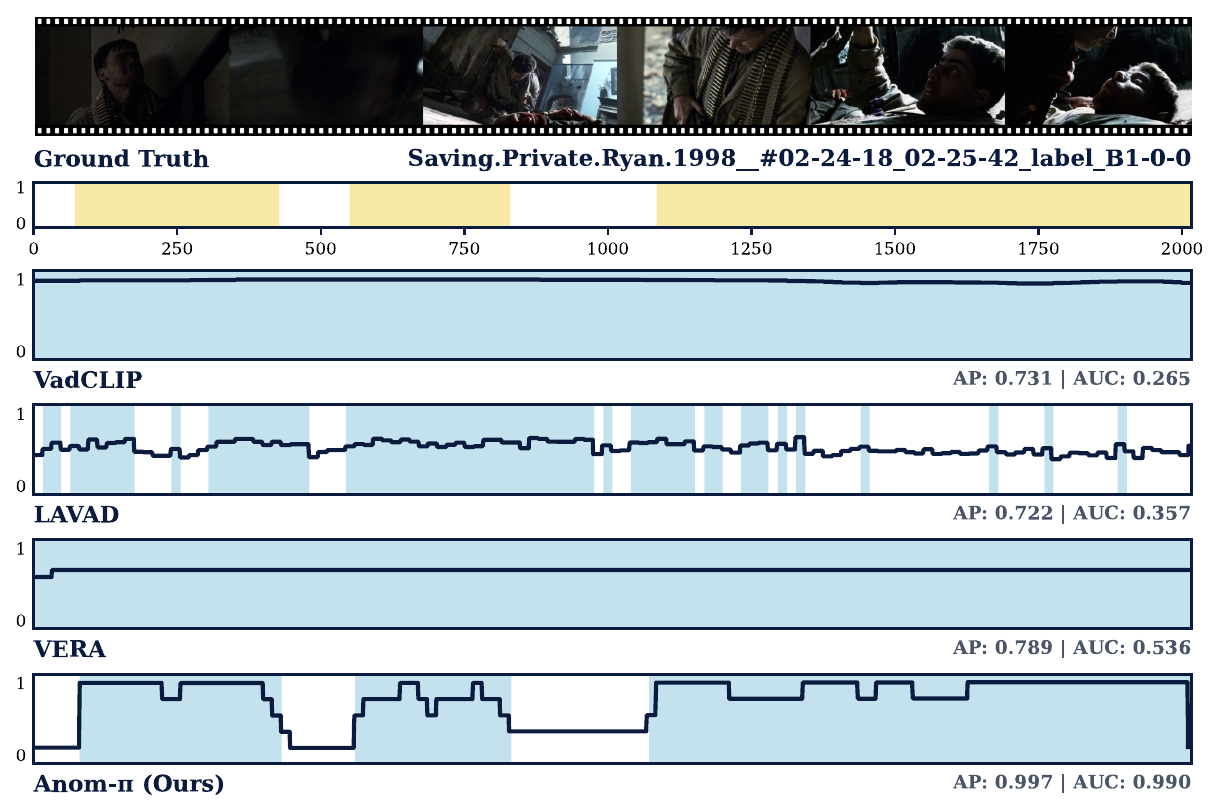}
    \caption{Movie Scene}
    \label{fig:curve_vis_b}
  \end{subfigure}\hfill
  \begin{subfigure}[t]{0.32\textwidth}
    \centering
    \includegraphics[width=\linewidth]{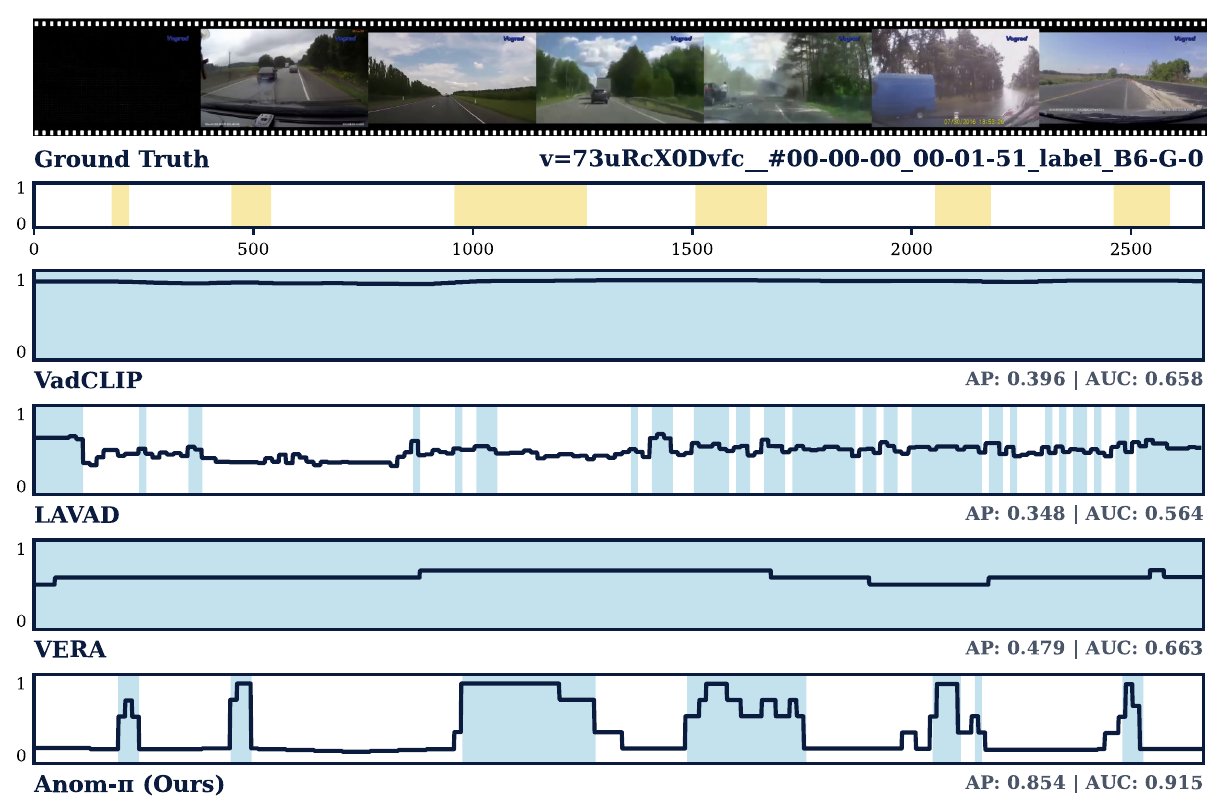}
    \caption{Dashcam Driving}
    \label{fig:curve_vis_c}
  \end{subfigure}

  \caption{\textbf{Temporal anomaly score curves.}
  We compare Anom-$\pi$ with VadCLIP, LAVAD, and VERA on three scenarios.
  Yellow indicates the ground-truth anomalous interval and blue indicates predicted anomalous spans after thresholding the anomaly score at $0.5$.}
  \label{fig:curve_vis}
\end{figure*}

\begin{figure}[h]
  \centering
  \includegraphics[width=\linewidth]{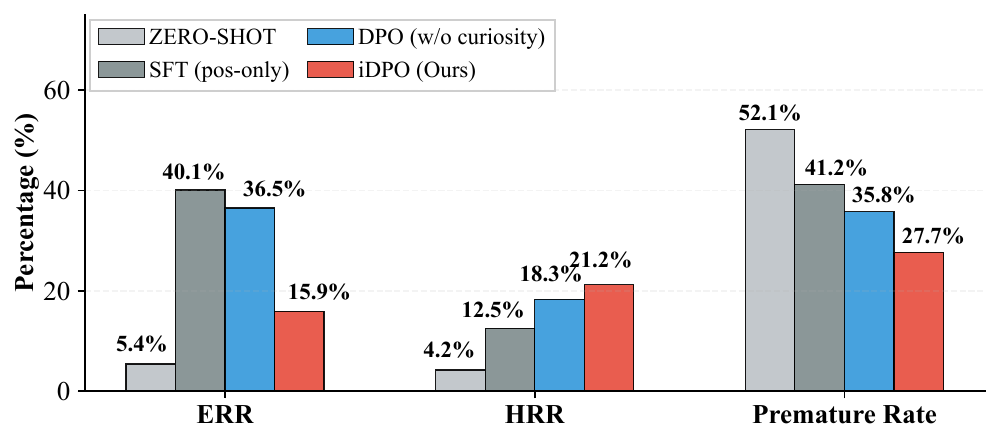}
  \caption{\textbf{Behavioral diagnostics.} Evidence Refinement Rate (ERR), Hypothesis Revision Rate (HRR), and premature finalization rate under different training objectives.}
  \label{fig:behavior_diag_policy}
\end{figure}


\textbf{Action space analysis.}
\label{sec:tool_space_ablation}
We evaluate the contribution of different actions by removing them from the action space $\mathcal{A}$.
Without \textsc{Backtrack}, performance degrades on short, bursty anomalies, since the model cannot refine temporal boundaries once the observation window moves forward.
Without \textsc{Expand}/\textsc{Sample}, restricting the model to a fixed temporal resolution limits its ability to disambiguate fine-grained events (e.g., distinguishing ``explosion'' from ``car accident''), leading to a noticeable drop in detection performance.
Overall, these results highlight that a unified, interleaved action space is vital for handling the diverse nature of video anomalies.

\textbf{Behavioral diagnostics.}
Beyond final accuracy, we examine whether the policy interacts for the right reason rather than simply requesting more frames.
We use Evidence Refinement Rate (ERR) to capture additional evidence seeking, Hypothesis Revision Rate (HRR) to capture belief updates after observation, and premature finalization rate to capture early stopping errors.
The diagnostics in Fig.~\ref{fig:behavior_diag_policy} show that iDPO attains the strongest HRR and substantially reduces premature finalization, while not maximizing ERR.
This pattern suggests more targeted refinement: the policy tends to interact when new evidence is likely to change the hypothesis, instead of following a fixed exploration budget.
Appendix~\ref{app:efficiency_stats} provides complementary action-distribution and revision statistics, further showing that interaction is concentrated on ambiguous clips.

\begin{table}[t]
\centering
\caption{Amortized interaction overhead of Anom-$\pi$ at inference time. Avg. Steps counts evidence-acquisition actions beyond the initial observation; Extra Frames denotes additionally queried frames per clip.}
\label{tab:main_interaction_overhead}
\setlength{\tabcolsep}{5pt}
\renewcommand{\arraystretch}{1.08}
\footnotesize
\resizebox{0.9\linewidth}{!}{%
\begin{tabular}{lccc}
\toprule
\textbf{Dataset} & \textbf{Avg. Steps} & \textbf{Extra Frames} & \textbf{Latency (s)} \\
\midrule
XD-Violence & 0.265 & 1.915 & 2.503 \\
UCF-Crime & 0.246 & 1.489 & 2.077 \\
UBnormal & 0.180 & 1.011 & 1.492 \\
CSAD & 0.397 & 3.826 & 3.877 \\
\bottomrule
\end{tabular}
}
\end{table}

\textbf{Interaction overhead analysis.}
Table~\ref{tab:main_interaction_overhead} quantifies the amortized inference overhead introduced by selective interaction.
On XD-Violence, Anom-$\pi$ requests 0.265 evidence-acquisition steps and 1.915 extra frames per clip on average, resulting in 2.503s latency.
Across datasets, the average number of interaction steps remains below 0.4, indicating that most clips are resolved after the initial observation.
This latency is lower than VERA at 2.882s and LAVAD at 3.912s on XD-Violence, even though Anom-$\pi$ performs closed-loop
reasoning.
These results show that Anom-$\pi$ improves localization while keeping inference overhead controlled. We do not claim hard real-time deployment.
Appendix~\ref{app:efficiency_stats} provides the full dataset-level overhead breakdown and latency comparison details.

\begin{table}[t]
\centering
\caption{Backbone generalization on XD-Violence. Base denotes the corresponding backbone without the learned Anom-$\pi$ interaction policy.}
\label{tab:backbone_generalization}
\setlength{\tabcolsep}{3.5pt}
\renewcommand{\arraystretch}{1.08}
\footnotesize
\adjustbox{max width=\linewidth}{
\begin{tabular}{lcccc}
\toprule
\multirow{2}{*}{\textbf{Backbone}} & \multicolumn{2}{c}{\textbf{Base}} & \multicolumn{2}{c}{\textbf{Anom-$\pi$ (Ours)}} \\
\cmidrule(lr){2-3}\cmidrule(lr){4-5}
& \textbf{AP(\%)} & \textbf{AUC(\%)} & \textbf{AP(\%)} & \textbf{AUC(\%)} \\
\midrule
Qwen3-VL-8B-Instruct & 58.83 & 82.35 & 74.46~\textcolor{gray!70}{\scriptsize(+15.63)} & 93.47~\textcolor{gray!70}{\scriptsize(+11.12)} \\
InternVL3.5-2B-Instruct & 44.78 & 72.34 & 66.18~\textcolor{gray!70}{\scriptsize(+21.40)} & 89.24~\textcolor{gray!70}{\scriptsize(+16.90)} \\
\bottomrule
\end{tabular}
}
\end{table}

\textbf{Backbone generalization analysis.}
To examine whether the learned interaction policy is tied to the default 2B Qwen backbone, we evaluate it with two additional VLMs.
With Qwen3-VL-8B-Instruct~\cite{yang2025qwen3} and InternVL3.5-2B-Instruct~\cite{wang2025internvl3}, Anom-$\pi$ raises XD-Violence AP(\%) from 58.83 to 74.46 and from 44.78 to 66.18, respectively (Table~\ref{tab:backbone_generalization}).
These consistent improvements indicate that the policy is not merely exploiting idiosyncrasies of the default backbone.
The larger gain on InternVL3.5 further suggests that active evidence acquisition is especially helpful when the base model is weaker, whereas the improvement with Qwen3-VL-8B shows that the same interaction mechanism remains beneficial at a larger model scale.

\subsection{Visualization Analysis}
\label{subsec:visual_analysis}

\textbf{Active interaction verifies subtle anomalies.}
The example in Fig.~\ref{fig:inst_explosion} is difficult because the anomaly is almost instantaneous: the first frame is a
black title-like frame, frames 2-13 show a stationary white car at night, and only frame 14 contains a brief flash with
smoke near the car. Under the coarse 16-frame view, this cue can be diluted by neighboring normal frames or confused with a
traffic accident. Anom-$\pi$ therefore narrows the hypothesis to the suspected timestamp and invokes fine-grained sampling
around frame 14. The returned subframes reveal smoke and debris while the car remains stationary, supporting an Explosion
rather than a road-accident interpretation. Since frame 16 has already switched to a different driving scene, the model also
uses the queried evidence to bound the event tightly and terminates with \textsc{Final} instead of continuing redundant
exploration.

\textbf{Temporal anomaly score curves compared with prior methods.}
The examples in Fig.~\ref{fig:curve_vis} show how Anom-$\pi$ handles different temporal structures and visual distractors.
In the sports broadcast clip, logos, commentator shots, player close-ups, normal skating, and hockey fights appear in rapid alternation.
VadCLIP~\cite{wu2024vadclip} and VERA~\cite{ye2025vera} respond to broad visually related regions, while LAVAD~\cite{zanella2024harnessing} stays nearly flat, but Anom-$\pi$ produces separated peaks around the annotated fight intervals and suppresses most broadcast cutaways.
In the movie clip, close-up combat scenes form several long abnormal spans separated by quieter transitions, where Anom-$\pi$ preserves the normal gap more clearly than baselines that saturate or fragment the timeline.
In the dashcam clip, short accident-related cues are embedded in long normal driving views such as open roads, passing cars, and a blue truck.
Anom-$\pi$ recovers more localized peaks around these sparse intervals, suggesting that closed-loop evidence acquisition helps distinguish brief abnormal evidence from visually similar background.

\subsection{Limitations}
\label{subsec:limitations}
The above results show the benefit of moving beyond a fixed observation pipeline, as Anom-$\pi$ can request evidence and stop adaptively as its hypothesis evolves.
Its remaining limitations mainly arise when the returned evidence is still intrinsically ambiguous.
Low-quality scenes may induce redundant exploration, and overly local inspection may lead to premature stopping.
We discuss representative cases in Appendix~\ref{app:qualitative_vis}.
Future policies could better calibrate uncertainty before \Final{} and estimate evidence value under poor visibility or motion blur.


\enlargethispage{2\baselineskip}
\section{Conclusion}
We introduced Anom-$\pi$, a video anomaly understanding framework that formulates inference as \emph{active} hypothesis verification with interleaved reasoning and observation.
Starting from a short context, Anom-$\pi$ alternates \textsc{Think} with adaptive observation and learns when to stop with \textsc{Final}.
To train this policy under weak supervision, iDPO directly optimizes AEI-induced trajectory preferences.
Across benchmarks, Anom-$\pi$ achieves strong frame-level performance with compact inference-time context, while ablations confirm the importance of interaction and iDPO for localizing short and subtle anomalies.
Overall, Anom-$\pi$ shows that jointly optimizing reasoning and observation provides an effective active paradigm for video anomaly understanding.

\clearpage

\section*{Acknowledgements}
This work was supported in part by the New Generation Artificial Intelligence-National Science and Technology Major Project under Grant No. 2025ZD0123601, in part by the National Natural Science Foundation of China under Grants No. 62472060 and 62221005, in part by the Science and Technology Innovation Key R\&D Program of Chongqing under Grant No. CSTB2023TIAD-STX0016, in part by the Natural Science Foundation of Chongqing under Grants No. CSTB2024NSCQ-QCXMX0060, in part by the China Postdoctoral Science Foundation under Grant No. 2025MD774186, in part by the Chongqing Special Postdoctoral Research Funding under Grant No. 2024CQBSHTB2002, in part by the Chongqing University of Posts and Telecommunications Ph.D. Innovative Talents Project under Grant No. BYJS202404.

\section*{Impact Statement}
This work advances video anomaly understanding by shifting from passive inference to active, evidence-driven inquiry.
A key positive impact is improved interpretability and reliability in safety monitoring.
By generating natural language explanations and grounding predictions in the evidence-acquisition process, the model can provide human operators with more context for its decisions.
In addition, the rational termination mechanism can improve computational efficiency by reducing redundant processing in simple scenarios.
Potential risks include privacy infringement and biased detection in diverse environments.
To mitigate these risks, we advocate deployment with human-in-the-loop verification, rigorous oversight of the model outputs, and privacy-preserving data handling.

\bibliography{sty/vau}
\bibliographystyle{sty/icml2026}

\newpage
\appendix
\onecolumn

This appendix provides additional setup and derivations omitted from the main text. We organize the appendix as follows:
\begin{enumerate}[label=(\arabic*),leftmargin=*,itemsep=0pt,topsep=0pt,parsep=0pt]
    \item App.~\ref{app:theory_discuss}: additional theoretical setup and formalization.
    \item App.~\ref{app:theory_proofs}: proofs for the theoretical analysis.
    \item App.~\ref{app:appendix_tools}: interaction tool design.
    \item App.~\ref{app:additional_exp}: additional experimental statistics and robustness checks.
    \item App.~\ref{app:qualitative_vis}: additional qualitative comparison visualizations.
\end{enumerate}

\section{Additional Theoretical Analysis Details}
\label{app:theory_discuss}

\subsection{Theoretical setup and paradigm shift}

We interpret curiosity as an intrinsic drive that allocates interaction to evidence-deficient states, encouraging the agent
to acquire information when its current explanation is insufficient.

\paragraph{Remark.}
From a physics and control perspective, curiosity corresponds to maximizing expected information value; equivalently, it
reduces a free-energy-like objective under an interaction cost, yielding principled interaction and termination behavior.

We analyze a paradigm shift. Traditional video anomaly understanding follows \emph{open-loop} observation followed by
\emph{cognition-only} reasoning. In contrast, Anom-$\pi$ follows \emph{closed-loop} interleaved inference, where
reasoning and interaction are interwoven.

\subsection{Interleaved inference with termination (formalization)}

Let $\theta \in \Theta$ denote the latent anomaly explanation and let $X \sim p_\theta$ be the underlying video process.
The agent interacts for at most $T$ rounds. At round $t$, it chooses an action $a_t\in\mathcal{A}$, where
$\mathcal{A}$ includes interaction actions and a termination action $\FinalPred{\hat y}$.
If $a_t$ is an interaction action, the environment returns an observation $o_t$ and the history is updated.
If $a_t=\FinalPred{\hat y}$, the episode terminates and outputs $\hat y$:
\begin{equation}
\label{eq:appendix_interaction_dynamics}
\begin{aligned}
a_t &\sim \pi(\cdot\mid h_{t-1}), \\
o_t &\sim p_\theta(\cdot\mid a_t,h_{t-1}), \\
h_t &= h_{t-1} \oplus (a_t,o_t),
\end{aligned}
\end{equation}
where $\oplus$ denotes concatenation.
Termination is therefore modeled as a decision of the policy, rather than an external thresholding rule.

\subsection{Sufficient condition for effective closed-loop policies}
\label{app:effective_policy_condition}

Assumption~\ref{assump:per_round_gain} in the main text is existential and is not a property guaranteed by arbitrary
interaction. One concrete sufficient condition for the assumption is the following. Define the per-action conditional KL gap
at round $t$ as
\begin{equation}
\Delta_t(h_{t-1},a)
=D_{\mathrm{KL}}\!\big(p_\theta(\cdot\mid a,h_{t-1})\,\|\,p_{\theta'}(\cdot\mid a,h_{t-1})\big).
\end{equation}
Suppose that for each round $t$ and for $P_\theta^\pi$-almost every history $h_{t-1}$, there exists an informative action
set $\mathcal{A}_{\mathrm{info}}(h_{t-1})\subseteq\mathcal{A}$ such that
\begin{equation}
\Delta_t(h_{t-1},a)\ge\delta \quad \forall a\in\mathcal{A}_{\mathrm{info}}(h_{t-1}),
\qquad
\pi\big(\mathcal{A}_{\mathrm{info}}(h_{t-1})\mid h_{t-1}\big)\ge\rho,
\end{equation}
for constants $\rho>0$ and $\delta>0$. Since $\Delta_t(h_{t-1},a)\ge0$, the expected per-round discriminability gain is
lower bounded as
\begin{equation}
\begin{aligned}
&\mathbb{E}_{h_{t-1},a_t\sim P_\theta^\pi}\!\left[\Delta_t(h_{t-1},a_t)\right] \\
&\quad\ge
\delta\,\mathbb{E}_{h_{t-1}\sim P_\theta^\pi}\!\left[
\pi\big(\mathcal{A}_{\mathrm{info}}(h_{t-1})\mid h_{t-1}\big)
\right]
\ge \rho\delta.
\end{aligned}
\end{equation}
Thus Assumption~\ref{assump:per_round_gain} holds with $c=\rho\delta$ under this sufficient condition.

In practice, AEI does not explicitly estimate $\Delta_t$, $\rho$, or $\delta$, and iDPO does not formally enforce the above
condition. Instead, $r_{\mathrm{inf}}$ rewards non-redundant evidence queries, $r_{\mathrm{hyp}}$ rewards hypothesis
refinement, and the cost term discourages redundant probing; iDPO then prefers trajectories with higher AEI utility. These
signals are empirical proxies aligned with effective evidence acquisition, not a formal guarantee that every learned policy
satisfies Assumption~\ref{assump:per_round_gain}.

\section{Proofs for Theoretical Analysis}
\label{app:theory_proofs}

This appendix (Appendix~\ref{app:theory_proofs}) collects complete proofs for the theoretical statements in
Sec.~\ref{sec:theory}.

\subsection{Proof of Theorem~\ref{thm:interaction_exp_decay}}
\label{app:proof_thm_interaction_exp_decay}

Fix $\theta\neq\theta'$ and a closed-loop policy $\pi$.
Let $P_\theta^\pi$ denote the trajectory distribution of $\tau=(a_{1:T},o_{1:T})$ induced by $\pi$ and the environment
conditionals $p_\theta(o_t\mid a_t,h_{t-1})$.
Because the policy is fixed, the action conditionals $\pi(a_t\mid h_{t-1})$ are shared under both $\theta$ and $\theta'$.
Thus we can factor the trajectory distributions as
\begin{equation}
P_\theta^\pi(\tau)
=
\prod_{t=1}^{T} \pi(a_t\mid h_{t-1})\,p_\theta(o_t\mid a_t,h_{t-1}),
\qquad
P_{\theta'}^\pi(\tau)
=
\prod_{t=1}^{T} \pi(a_t\mid h_{t-1})\,p_{\theta'}(o_t\mid a_t,h_{t-1}).
\end{equation}

\paragraph{KL chain rule along the interaction.}
Using the above factorization, we expand the log-likelihood ratio as
\begin{equation}
\log\frac{P_\theta^\pi(\tau)}{P_{\theta'}^\pi(\tau)}
=
\sum_{t=1}^{T}\log\frac{p_\theta(o_t\mid a_t,h_{t-1})}{p_{\theta'}(o_t\mid a_t,h_{t-1})},
\end{equation}
since the policy terms cancel.
Taking expectation under $\tau\sim P_\theta^\pi$ yields
\begin{equation}
\begin{aligned}
D_{\mathrm{KL}}\!\big(P_\theta^\pi\,\|\,P_{\theta'}^\pi\big)
&=\mathbb{E}_{\tau\sim P_\theta^\pi}\!\left[\log\frac{P_\theta^\pi(\tau)}{P_{\theta'}^\pi(\tau)}\right]
=\sum_{t=1}^{T}\mathbb{E}_{\tau\sim P_\theta^\pi}\!\left[\log\frac{p_\theta(o_t\mid a_t,h_{t-1})}{p_{\theta'}(o_t\mid a_t,h_{t-1})}\right]\\
&=\sum_{t=1}^{T}\mathbb{E}_{h_{t-1},a_t\sim P_\theta^\pi}\!\left[
D_{\mathrm{KL}}\!\big(p_\theta(\cdot\mid a_t,h_{t-1})\,\|\,p_{\theta'}(\cdot\mid a_t,h_{t-1})\big)
\right].
\end{aligned}
\end{equation}

By Assumption (Per-round discriminability gain), each summand is lower bounded by $c$, hence
\begin{equation}
D_{\mathrm{KL}}\!\big(P_\theta^\pi\,\|\,P_{\theta'}^\pi\big)\ge cT.
\end{equation}

\paragraph{From KL separability to exponential testing error.}
For binary testing between $P_\theta^\pi$ and $P_{\theta'}^\pi$ with equal priors, let $P_e^\pi(T)$ denote the minimal
(Bayes) error probability.
A standard inequality (Bretagnolle-Huber) gives
\begin{equation}
P_e^\pi(T)
\le
\tfrac{1}{2}\exp\Big(-\tfrac{1}{2}D_{\mathrm{KL}}\!\big(P_\theta^\pi\,\|\,P_{\theta'}^\pi\big)\Big).
\end{equation}
Combining with $D_{\mathrm{KL}}(P_\theta^\pi\,\|\,P_{\theta'}^\pi)\ge cT$ yields
\begin{equation}
P_e^\pi(T)\le \tfrac{1}{2}\exp\Big(-\tfrac{c}{2}T\Big),
\end{equation}
which establishes an exponential decay in $T$ (up to constant factors in the exponent).

\subsection{Proof of Theorem~\ref{thm:openloop_barrier}}

Fix $\theta\neq\theta'$ satisfying $D_{\mathrm{TV}}(p_\theta^g,p_{\theta'}^g)\le \varepsilon$.
Consider testing $H_0:\vartheta=\theta$ versus $H_1:\vartheta=\theta'$ based on $Y\sim p_\vartheta^g$.
By Le Cam's two-point method, for any estimator $\hat\theta(Y)$,
\begin{equation}
\inf_{\hat\theta}\sup_{\vartheta\in\{\theta,\theta'\}}\Pr_\vartheta\!\big(\hat\theta(Y)\neq\vartheta\big)
\ge \tfrac{1}{2}\big(1-D_{\mathrm{TV}}(p_\theta^g,p_{\theta'}^g)\big)
\ge \tfrac{1}{2}(1-\varepsilon),
\end{equation}
which proves the claim.

\subsection{Proof of Theorem~\ref{thm:voi_termination}}

At history $h$, stopping incurs cost $L_{\mathrm{stop}}(h)$.
Taking one additional interaction $a$ incurs the immediate interaction cost (as defined in the theorem) plus the expected
post-interaction stopping cost $\mathbb{E}[L_{\mathrm{stop}}(h\oplus(a,o))\mid h,a]$.
Define
\begin{equation}
L_{\mathrm{ask}}(h,a) := \text{interaction-cost}(h,a) + \mathbb{E}\big[ L_{\mathrm{stop}}(h\oplus(a,o)) \mid h,a \big].
\end{equation}
The optimal one-step decision is to terminate iff $L_{\mathrm{stop}}(h) \le \min_{a\in\mathcal{A}} L_{\mathrm{ask}}(h,a)$,
which is exactly the value-of-information stopping criterion.

\section{Interaction Tools: Design and Interfaces}
\label{app:appendix_tools}

\subsection{Tool taxonomy and design principles}
\label{app:appendix_tools_principles}

We design tools as a minimal set of \emph{atomic} evidence query operators that correspond to common human video reviewing
behaviors, while keeping the interface simple and reproducible.
Our tool set decomposes into (i) a deliberation action (\Think) with no environment side effects;
(ii) investigative actions (\Backtrack, \Expand, \Sample) that change the observation view and return new evidence; and
(iii) a termination action (\Final) that outputs the final structured prediction.

Each tool follows the same schema constrained calling convention, which standardizes invocation and error handling.
We impose lightweight constraints on parameters (e.g., bounded indices and limited return sizes) to control latency and
prevent runaway interaction, and we normalize tool outputs into a unified return structure (\texttt{status},
\texttt{new\_image\_paths}, \texttt{followup\_query}) so that downstream reasoning can remain tool agnostic.

\subsection{Tool specifications}
\label{app:appendix_tools_specs}

We use the OpenAI \textsc{Hermes} tool-calling format, and specify each tool interface as a compact, machine-readable
schema (JSON Schema / function-calling schema), accompanied by a short natural-language contract. In the main paper we
summarize the tool set at a high level; here we provide the full schema for reproducibility.

\newtcblisting{tooljson}{
  enhanced,
  breakable,
  colback=black!2,
  colframe=black!35,
  boxrule=0.4pt,
  arc=1.5mm,
  left=0.8mm,right=0.8mm,top=0.6mm,bottom=0.6mm,
  listing only,
  listing options={
    basicstyle=\ttfamily\footnotesize,
    columns=fullflexible,
    breaklines=true,
    breakatwhitespace=false,
    showstringspaces=false
  }
}

\paragraph{Tool: \Think{} (\texttt{think}).}
\textbf{Purpose:} record private analysis/planning to decide the next action; \emph{no side effects}.
\textbf{I/O:} input is a JSON object satisfying the schema below; output returns \texttt{next\_action} and a
\texttt{followup\_query} string.

\begin{tooljson}
{
  "type": "function",
  "function": {
    "name": "think",
    "description": "Record private analysis/planning before the next action. Include the reason for choosing the next tool inside `thought`.",
    "parameters": {
      "type": "object",
      "properties": {
        "thought": {
          "type": "string",
          "description": "Internal analysis + decision, keep brief. Format: 'OBS: ...; DECIDE: ...; PLAN: ...'."
        },
        "hyp": {
          "type": "object",
          "description": "Current candidate hypotheses before taking the next action.",
          "properties": {
            "status": {"type": "string", "enum": ["unknown", "tentative", "confident"]},
            "cands": {
              "type": "array",
              "minItems": 0,
              "maxItems": 3,
              "description": "Up to 3 candidates ordered by confidence (highest first).",
              "items": {
                "type": "object",
                "properties": {
                  "c": {
                    "type": "array",
                    "minItems": 1,
                    "maxItems": 1,
                    "items": {"type": "string"},
                    "description": "Single category code. Use ['A'] for Normal."
                  },
                  "anom": {
                    "type": "array",
                    "minItems": 2,
                    "maxItems": 2,
                    "items": {"type": "integer", "minimum": 0, "maximum": 16},
                    "description": "Anomaly span within the 16-frame input, 1-based inclusive (1..16). Use [0,0] for Normal."
                  }
                },
                "required": ["c", "anom"],
                "additionalProperties": false
              }
            }
          },
          "required": ["status", "cands"],
          "additionalProperties": false
        },
        "next_action": {
          "type": "string",
          "description": "Next tool to call.",
          "enum": ["think", "get_segment_context", "select_frames", "fine_grained_sampling", "final_answer"]
        }
      },
      "required": ["thought", "hyp", "next_action"],
      "additionalProperties": false
    }
  }
}
\end{tooljson}
\vspace{-0.4em}

\paragraph{Minimal example call.}
\begin{tooljson}
{"thought":"OBS: ...; DECIDE: sample frames; PLAN: ...",
 "hyp":{"status":"tentative","cands":[{"c":["B"],"anom":[5,10]}]},
 "next_action":"select_frames"}
\end{tooljson}
\vspace{-0.4em}

\paragraph{Tool: \Backtrack\ (\texttt{select\_frames}).}
\textbf{Purpose:} select a small set of keyframes for local, fine-grained inspection when anomalies are temporally sparse.
\textbf{I/O:} input specifies up to 4 indices within the 16-frame clip; output provides the selected frame paths and sets
\texttt{next\_action=think}.

\begin{tooljson}
{
  "type": "function",
  "function": {
    "name": "select_frames",
    "description": "Selects keyframes with potential abnormal activities. Useful when anomalies are temporally sparse or occur in short bursts. Maximum 4 frames allowed.",
    "parameters": {
      "type": "object",
      "properties": {
        "target_frames": {
          "type": "array",
          "description": "The target frame indices to select from the given frames.",
          "items": {
            "type": "integer",
            "description": "Frame index from 1 to 16.",
            "minimum": 1,
            "maximum": 16
          },
          "maxItems": 4
        }
      },
      "required": ["target_frames"],
      "additionalProperties": false
    }
  }
}
\end{tooljson}
\vspace{-0.4em}

\paragraph{Minimal example call.}\vspace{-0.4em}
\begin{tooljson}
{"target_frames":[2,5,10,15]}
\end{tooljson}
\vspace{-0.4em}

\paragraph{Tool: \Expand\ (\texttt{get\_segment\_context}).}
\textbf{Purpose:} retrieve a small number of context frames immediately before/after the current segment.
\textbf{I/O:} input specifies the side (\texttt{prev} or \texttt{next}); output returns \texttt{new\_image\_paths} and sets
\texttt{next\_action=think}.

\begin{tooljson}
{
  "type": "function",
  "function": {
    "name": "get_segment_context",
    "description": "Get context frames outside the provided segment (prev: before the provided segment; next: after the provided segment).",
    "parameters": {
      "type": "object",
      "properties": {
        "side": {"type": "string", "enum": ["prev", "next"]}
      },
      "required": ["side"],
      "additionalProperties": false
    }
  }
}
\end{tooljson}
\vspace{-0.4em}

\paragraph{Minimal example call.}\vspace{-0.4em}
\begin{tooljson}
{"side":"prev"}
\end{tooljson}
\vspace{-0.4em}

\paragraph{Tool: \Sample\ (\texttt{fine\_grained\_sampling}).}
\textbf{Purpose:} sample additional frames around a selected coarse frame to capture subtle / transient anomalies.
\textbf{I/O:} input specifies a \texttt{target\_frame} (1--16) within the current 16-frame clip; output returns up to 4
fine-grained frames around that coarse frame.

\begin{tooljson}
{
  "type": "function",
  "function": {
    "name": "fine_grained_sampling",
    "description": "Perform fine-grained frame sampling around a chosen target frame (1-16) to capture subtle and transient anomalies that may be missed by coarse sampling. Returns up to 4 fine-grained frames around the target.",
    "parameters": {
      "type": "object",
      "properties": {
        "target_frame": {
          "type": "integer",
          "description": "Target frame index (1 to 16)",
          "minimum": 1,
          "maximum": 16
        }
      },
      "required": ["target_frame"],
      "additionalProperties": false
    }
  }
}
\end{tooljson}
\vspace{-0.4em}

\paragraph{Minimal example call.}\vspace{-0.4em}
\begin{tooljson}
{"target_frame":5}
\end{tooljson}
\vspace{-0.4em}

\paragraph{Tool: \Final\ (\texttt{final\_answer}).}
\textbf{Purpose:} return the final prediction as a category code, anomaly span within the 16-frame input (1-based), and a
brief explanation.
\textbf{I/O:} input is \texttt{null}; output returns \texttt{c}, \texttt{anom}, and \texttt{exp} as the final structured
prediction.

\begin{tooljson}
{
  "type": "function",
  "function": {
    "name": "final_answer",
    "description": "Return category code, anomaly span within the 16-frame input (1-based), and a brief explanation.",
    "parameters": {
      "type": "object",
      "properties": {
        "c": {
          "type": "array",
          "minItems": 1,
          "maxItems": 1,
          "items": {"type": "string"},
          "description": "Single category code. Use 'A' for Normal."
        },
        "anom": {
          "type": "array",
          "minItems": 2,
          "maxItems": 2,
          "items": {"type": "integer", "minimum": 0, "maximum": 16},
          "description": "Anomaly span within the 16-frame input, 1-based inclusive (1..16). Use [0,0] for Normal."
        },
        "exp": {
          "type": "string",
          "description": "Brief explanation (1-2 sentences) with key visual cues."
        }
      },
      "required": ["c", "anom", "exp"],
      "additionalProperties": false
    }
  }
}
\end{tooljson}
\vspace{-0.4em}

\paragraph{Minimal example call.}\vspace{-0.4em}
\begin{tooljson}
{"c":["A"],"anom":[0,0],"exp":"No anomalous activity is observed."}
\end{tooljson}
\vspace{-0.4em}

\paragraph{Discussion: unified tool/action format.}
Our tool interfaces adopt a single, unified structured format across all actions (\Think, \Backtrack, \Expand, \Sample, and
\Final). This design makes the interaction protocol easy to learn and imitate during alignment: the model always emits a
tool call with an explicit name and a schema constrained argument object, which significantly reduces format errors
(e.g., malformed JSON, missing fields, or mixed templates) compared to ad hoc prompting.

More importantly, the unified format places \emph{all} model decisions (deliberation, evidence acquisition, and
termination) into the same discrete action space $\mathcal{A}$. This allows us to optimize a single interleaved policy
$\pi$ over heterogeneous behaviors, enables direct preference comparisons between trajectories that differ in both
\emph{what} evidence is acquired and \emph{when} to stop, and supports principled cost/utility tradeoffs for rational
interaction.

\section{Additional Experimental Analyses}
\label{app:additional_exp}

This section provides supporting experimental statistics that complement the main-text results without adding extra
figures to the main paper. The emphasis is on whether the learned policy actually behaves like an adaptive reviewer:
it should request extra evidence only when useful, revise hypotheses when newly acquired evidence changes the current
belief, and remain stable under reasonable choices of preference-construction and evaluation hyperparameters.

\begin{table}[htbp]
\centering
\caption{Dataset-level amortized interaction overhead of Anom-$\pi$. Each clip starts from 16 frames and can request up to 16 additional frames; Avg. Observed Frames is computed as 16 plus the average additionally queried frames.}
\label{tab:interaction_overhead_detail}
\setlength{\tabcolsep}{6pt}
\renewcommand{\arraystretch}{1.08}
\footnotesize
\adjustbox{max width=0.92\linewidth}{
\begin{tabular}{lccccc}
\toprule
\textbf{Dataset} & \textbf{Initial Frames} & \textbf{Avg. Steps} & \textbf{Extra Frames} & \textbf{Avg. Observed Frames} & \textbf{Latency (s)} \\
\midrule
XD-Violence & 16 & 0.265 & 1.915 & 17.915 & 2.503 \\
UCF-Crime & 16 & 0.246 & 1.489 & 17.489 & 2.077 \\
UBnormal & 16 & 0.180 & 1.011 & 17.011 & 1.492 \\
CSAD & 16 & 0.397 & 3.826 & 19.826 & 3.877 \\
\bottomrule
\end{tabular}
}
\end{table}

\subsection{Efficiency and Interaction Statistics}
\label{app:efficiency_stats}

To make the overhead concrete, Table~\ref{tab:interaction_overhead_detail} reports amortized interaction costs
across datasets. Starting from
16 initial frames, Anom-$\pi$ only requests 1.011-3.826 additional frames on average, so the actually observed context
remains close to the initial view: 17.011 frames on UBnormal, 17.489 on UCF-Crime, 17.915 on XD-Violence, and 19.826
on CSAD. The average number of evidence-acquisition steps is also below one on all datasets (0.180-0.397), indicating
that the policy does not spend a fixed multi-step budget on every clip. Instead, the overhead increases with apparent
ambiguity: UBnormal has the lowest cost (0.180 steps, 1.011 extra frames, 1.492s), whereas CSAD requires the most
additional evidence (0.397 steps, 3.826 extra frames, 3.877s).

\begin{table}[htbp]
\centering
\caption{Per-clip latency comparison on XD-Violence. Lower latency is better; $\Delta$ reports the absolute latency gap relative to Anom-$\pi$.}
\label{tab:xd_latency_comparison}
\setlength{\tabcolsep}{7pt}
\renewcommand{\arraystretch}{1.08}
\footnotesize
\adjustbox{max width=0.82\linewidth}{
\begin{tabular}{llcc}
\toprule
\textbf{Method} & \textbf{Inference Context} & \textbf{Latency (s)} & \textbf{$\Delta$ vs. Anom-$\pi$} \\
\midrule
LAVAD & -- & 3.912 & +1.409 \\
VERA & 300 frames & 2.882 & +0.379 \\
\rowcolor{gray!12}
Anom-$\pi$ & 16 initial + 1.915 queried frames (avg.) & \textbf{2.503} & 0.000 \\
\bottomrule
\end{tabular}
}
\end{table}

On XD-Violence, selective interaction also keeps latency competitive (Table~\ref{tab:xd_latency_comparison}).
Although Anom-$\pi$ performs closed-loop reasoning, its 2.503s per-clip latency is lower than VERA
(2.882s) and LAVAD (3.912s), because most clips are handled after the initial observation and only ambiguous clips invoke
additional tools. This result should be interpreted as an amortized efficiency claim rather than a hard real-time guarantee.

\begin{table}[htbp]
\centering
\caption{Action distribution and amortized interaction statistics across datasets. Avg. Steps counts evidence-acquisition actions beyond the initial observation, and Extra Frames denotes additionally queried frames per clip.}
\label{tab:behavior_action_stats}
\setlength{\tabcolsep}{4pt}
\renewcommand{\arraystretch}{1.08}
\footnotesize
\adjustbox{max width=0.95\linewidth}{
\begin{tabular}{lccccccc}
\toprule
\textbf{Dataset} & \textbf{Backtrack (\%)} & \textbf{Expand (\%)} & \textbf{Sample (\%)} & \textbf{Avg. Steps} & \textbf{Distinct Locs} & \textbf{Extra Frames} & \textbf{Latency (s)} \\
\midrule
XD-Violence & 10.33 & 14.22 & 75.46 & 0.265 & 1.857 & 1.915 & 2.503 \\
UCF-Crime & 41.71 & 12.44 & 45.85 & 0.246 & 1.334 & 1.489 & 2.077 \\
UBnormal & 25.92 & 21.14 & 52.94 & 0.180 & 0.852 & 1.011 & 1.492 \\
CSAD & 15.77 & 20.49 & 63.73 & 0.397 & 3.426 & 3.826 & 3.877 \\
\bottomrule
\end{tabular}
}
\end{table}

\begin{table}[t]
\centering
\caption{Hypothesis revision behavior across datasets. Inter. denotes clips with evidence acquisition beyond the initial observation; Rev. Overall reports the revision rate over all clips, and Rev. Within Inter. reports the revision rate among interacted clips.}
\label{tab:behavior_revision_challenge}
\setlength{\tabcolsep}{4pt}
\renewcommand{\arraystretch}{1.08}
\footnotesize
\adjustbox{max width=0.95\linewidth}{
\begin{tabular}{lccc}
\toprule
\textbf{Dataset} & \textbf{Inter. (\%)} & \textbf{Rev. Overall (\%)} & \textbf{Rev. Within Inter. (\%)} \\
\midrule
XD-Violence & 11.74 & 2.49 & 21.20 \\
UCF-Crime & 16.52 & 3.32 & 20.10 \\
UBnormal & 12.10 & 0.69 & 5.72 \\
CSAD & 37.43 & 11.72 & 31.31 \\
\bottomrule
\end{tabular}
}
\end{table}

Beyond aggregate cost, Tables~\ref{tab:behavior_action_stats} and~\ref{tab:behavior_revision_challenge}
separate action choices from hypothesis revisions. The action distribution is dataset-dependent: \Sample{} dominates
XD-Violence, UBnormal, and
CSAD (75.46\%, 52.94\%, and 63.73\%), consistent with the need to verify transient local cues, while UCF-Crime uses much
more \Backtrack{} (41.71\%), matching its longer surveillance-style videos where preceding context is often informative.
Hypothesis revision is similarly concentrated on the subset of clips that actually trigger interaction. For example,
only 2.49\% of all XD-Violence clips revise their hypothesis, but 21.20\% of the interacted clips do; on CSAD, the
corresponding rates are 11.72\% overall and 31.31\% within interacted clips. Thus, revision is not a ubiquitous artifact
of the tool format, but mainly appears when the policy has decided that extra evidence is worth acquiring.

\begin{table}[htbp]
\centering
\caption{Hyperparameter sensitivity. Panel (a) reports the $\eta$ sweep with both XD-Violence AP and UCF-Crime AUC; Panel (b) reports $\lambda$ and $N_{\mathrm{buf}}$ sweeps on XD-Violence. The default setting is $\eta{=}0.5$, $\lambda{=}1.0$, and $N_{\mathrm{buf}}{=}2$.}
\label{tab:hparam_sensitivity}
\setlength{\tabcolsep}{4pt}
\renewcommand{\arraystretch}{1.06}
\footnotesize
\textbf{(a) Interaction-cost sweep.}
\vspace{0.5mm}

\adjustbox{max width=0.95\linewidth}{
\begin{tabular}{lcccccc}
\toprule
\textbf{$\eta$} & \textbf{Inter. (\%)} & \textbf{Avg. Steps} & \textbf{Dist. Locs} & \textbf{HRR (\%)} & \textbf{XD AP (\%)} & \textbf{UCF AUC (\%)} \\
\midrule
0.0 & 36.11 & 0.2865 & 1.8696 & 2.81 & 70.04 & 84.52 \\
0.1 & 13.26 & 0.2658 & 1.8631 & 2.61 & 72.43 & 83.22 \\
0.5 & 11.74 & 0.2650 & 1.8570 & 2.49 & 72.29 & 84.46 \\
1.0 & 11.26 & 0.2593 & 1.7809 & 1.78 & 70.87 & 82.87 \\
2.0 & {\textcolor{white}{0}}9.75 & 0.2439 & 1.5764 & 1.53 & 69.90 & 82.53 \\
\bottomrule
\end{tabular}
}

\vspace{1mm}
\textbf{(b) AEI-weight and buffer-size sweeps.}
\vspace{0.5mm}

\adjustbox{max width=0.86\linewidth}{
\begin{tabular}{llccccc}
\toprule
\textbf{Hyperparam.} & \textbf{Value} & \textbf{Inter. (\%)} & \textbf{Avg. Steps} & \textbf{Dist. Locs} & \textbf{HRR (\%)} & \textbf{XD AP (\%)} \\
\midrule
\multirow{3}{*}{$\lambda$} & 0.5 & 10.66 & 0.2507 & 1.6820 & 2.37 & 71.98 \\
 & 1.0 & 11.74 & 0.2650 & 1.8570 & 2.49 & 72.29 \\
 & 2.0 & 12.59 & 0.2761 & 1.9913 & 2.68 & 72.15 \\
\midrule
\multirow{3}{*}{$N_{\mathrm{buf}}$} & 1 & 6.84 & 0.1819 & 1.0809 & 2.48 & 70.29 \\
 & 2 & 11.74 & 0.2650 & 1.8570 & 2.49 & 72.29 \\
 & 3 & 12.16 & 0.2683 & 1.8939 & 2.65 & 72.33 \\
\bottomrule
\end{tabular}
}
\end{table}

\subsection{Hyperparameter Sensitivity}
\label{app:hparam_sensitivity}

For hyperparameters, we vary the interaction-cost weight $\eta$, the AEI weight $\lambda$, and the evidence
buffer size $N_{\mathrm{buf}}$; the full sweep is given in Table~\ref{tab:hparam_sensitivity}. The interaction-cost
sweep shows the clearest trade-off. When
$\eta{=}0$, the policy explores much more frequently (36.11\% interacted clips), but XD-Violence AP drops to 70.04,
suggesting that removing the cost term encourages redundant evidence gathering rather than better decisions. Moderate
costs give the best balance: $\eta{=}0.1$ reaches the best XD-Violence AP (72.43), while the default $\eta{=}0.5$ remains
near-optimal on XD-Violence (72.29) and gives the best UCF-Crime AUC (84.46). In contrast, larger costs make the policy
too conservative: at $\eta{=}2.0$, the interaction ratio decreases to 9.75\%, but XD-Violence AP and UCF-Crime AUC fall
to 69.90 and 82.53, respectively.

The $\lambda$ sweep is much flatter. Increasing $\lambda$ from 0.5 to 2.0 changes XD-Violence AP only within a 0.31-point
range (71.98-72.29), while gradually increasing the interaction ratio, average steps, and distinct sampled locations.
This indicates that once the cost term is present, the method is not highly sensitive to the exact AEI weight. The
$N_{\mathrm{buf}}$ sweep shows a different pattern: a buffer of one observation is insufficient (70.29 AP), whereas
$N_{\mathrm{buf}}{=}2$ and $N_{\mathrm{buf}}{=}3$ perform almost identically (72.29 vs. 72.33 AP). We therefore use
$N_{\mathrm{buf}}{=}2$ as the default because it captures most of the benefit while avoiding unnecessary context growth.

\begin{table}[htbp]
\centering
\caption{Trajectory construction variants on XD-Violence. AP is reported in \%, Avg. Steps counts evidence-acquisition actions beyond the initial observation, and $\Delta$AP is measured relative to full iDPO.}
\label{tab:trajectory_construction}
\setlength{\tabcolsep}{4pt}
\renewcommand{\arraystretch}{1.08}
\footnotesize
\adjustbox{max width=0.95\linewidth}{
\begin{tabular}{llccc}
\toprule
\textbf{Variant} & \textbf{Ranking Signal} & \textbf{AP} & \textbf{Avg. Steps} & \textbf{$\Delta$AP} \\
\midrule
Terminal-only & Final prediction only & 66.87 & 0.2880 & -5.42 \\
Cost-agnostic & Task/evidence utility without interaction cost & 70.04 & 0.2865 & -2.25 \\
Extreme-pairs-only & Only highest- vs. lowest-utility pairs & 71.80 & 0.2690 & -0.49 \\
\rowcolor{gray!12}
Full iDPO & Task + AEI + cost-aware trajectory ranking & \textbf{72.29} & 0.2650 & 0.00 \\
\bottomrule
\end{tabular}
}
\end{table}

\subsection{Trajectory Construction and Protocol Robustness}
\label{app:trajectory_protocol_stats}

Through the ablations reported in Table~\ref{tab:trajectory_construction}, we next isolate the effect of
preference construction for iDPO. Terminal-only ranking, which scores trajectories only by the final prediction, obtains
66.87 AP and is 5.42 points below full iDPO,
showing that final-label correctness alone is too weak to supervise evidence acquisition and stopping. Removing the
interaction cost improves over terminal-only ranking but still reaches only 70.04 AP, and it uses more steps than full
iDPO (0.2865 vs. 0.2650), consistent with cost-agnostic preferences encouraging unnecessary exploration. Using only
extreme utility pairs performs much better (71.80 AP), but remains 0.49 points below full iDPO, suggesting that the
complete preference set provides useful additional comparisons. Overall, the dominant factor is whether the ranking signal
captures task utility, AEI evidence quality, and interaction cost jointly.

\begin{table}[htbp]
\centering
\caption{Stride robustness on XD-Violence. We use clip length $L{=}16$ and vary the evaluation stride $\Delta$ on the subsampled frame sequence. AP is reported in \%.}
\label{tab:stride_robustness}
\setlength{\tabcolsep}{10pt}
\renewcommand{\arraystretch}{1.08}
\footnotesize
\adjustbox{max width=0.6\linewidth}{
\begin{tabular}{lcc}
\toprule
\textbf{Stride $\Delta$} & \textbf{Setting} & \textbf{XD AP} \\
\midrule
1 & Dense overlap & 74.57 \\
\rowcolor{gray!12}
8 & Default & \textbf{72.29} \\
16 & Non-overlapping & 69.89 \\
\bottomrule
\end{tabular}
}
\end{table}

When varying the evaluation stride for frame-level scoring (Table~\ref{tab:stride_robustness}), dense overlap
with $\Delta{=}1$ gives the highest AP (74.57), while non-overlapping evaluation with $\Delta{=}16$ is more challenging
(69.89). The default $\Delta{=}8$ setting reaches 72.29 AP, providing a middle ground between dense temporal coverage and
evaluation cost. Since prior methods usually report a single score under their own evaluation protocols and do not provide
matched stride sweeps, we use the results in Table~\ref{tab:sota_frame} only as reference. Under the default setting,
Anom-$\pi$ remains higher than VERA at 70.11 AP and PANDA at 70.16 AP on XD-Violence. These results show that the
advantage is not tied to a single stride choice, although denser evaluation naturally gives more opportunities to cover short
anomalous intervals.

\subsection{Representative UCF-Crime Class-wise Evidence}
\label{app:ucf_classwise}

Finally, the UCF-Crime class-wise breakdown (Table~\ref{tab:ucf_classwise_gain}) highlights where active
interaction helps most. The gain is much larger for Stealing (+13.31 AUC), where decisive cues are often local, subtle, and
easy to miss under a fixed initial view. In contrast, RoadAccidents improves by a smaller +3.05 AUC because the abnormal
event is usually global and visually salient, so the initial observation already provides stronger evidence. This helps
explain why the dataset-level UCF-Crime improvement is moderate: active querying is most beneficial for localized or
ambiguous categories, while gains are naturally diluted by classes that are already well served by open-loop observation.

\begin{table}[htbp]
\centering
\caption{Representative class-wise gains on UCF-Crime. The examples illustrate where active querying helps more or less. Gains are reported as AUC(\%) differences.}
\label{tab:ucf_classwise_gain}
\setlength{\tabcolsep}{6pt}
\renewcommand{\arraystretch}{1.08}
\footnotesize
\adjustbox{max width=0.86\linewidth}{
\begin{tabular}{llc}
\toprule
\textbf{Class} & \textbf{Typical Visual Pattern} & \textbf{AUC Gain} \\
\midrule
Stealing & Localized and subtle evidence & +13.31 \\
RoadAccidents & Global and visually salient events & +3.05 \\
\bottomrule
\end{tabular}
}
\end{table}

\clearpage
\section{Additional Qualitative Visualizations}
\label{app:qualitative_vis}

We first provide paired comparison cases between Anom-$\pi$ and an open-loop QwenVL baseline, grouped into
Fig.~\ref{fig:appx_comparison_abcd} and Fig.~\ref{fig:appx_comparison_efgh}. We further include representative failure
cases to illustrate remaining limitations of closed-loop evidence acquisition and stopping.

\paragraph{Comparison cases I.}
The first pair of qualitative comparisons highlights cases where the abnormal cue is present but easy to under-localize (Fig.~\ref{fig:appx_comparison_abcd}).
In the street scene, the initial frames contain many distractors, including pedestrians, parked cars, and street lights.
Anom-$\pi$ first probes the frames around the white car with a shattered windshield, then checks the previous segment to
rule out a normal traffic scene and localizes the riot over the full clip. The open-loop baseline notices the broken window
but treats it as a minor disturbance and predicts Normal. In the beach scene, the abnormality evolves from a tense approach
to a struggle in shallow water. Anom-$\pi$ samples intermediate frames to connect the early confrontation with the later
physical struggle, whereas the baseline fires only at the last few frames and misses the onset of the fight.

\paragraph{Comparison cases II.}
The second pair shifts to cases that require either boundary refinement or close-up interaction evidence (Fig.~\ref{fig:appx_comparison_efgh}). In the dim explosion sequence, early frames show a low-visibility military setting, but the decisive smoke and
debris appear later. Anom-$\pi$ checks the suspected burst and its following context, extending the prediction to cover the
visible aftermath; the baseline instead assigns the event to the beginning of the clip. In the residential scene, the global
view looks like a routine encounter involving a uniformed person, so the initial hypothesis is Normal. Fine-grained sampling
around the doorway reveals a person in a white shirt being physically restrained, which changes the decision to Fighting;
the open-loop baseline misses this local contact and remains Normal.

\begingroup
\captionsetup[figure]{font=small,skip=2pt}

\begin{figure}[p]
\centering
\begin{minipage}[t]{0.49\textwidth}
\centering
\includegraphics[width=0.8\linewidth]{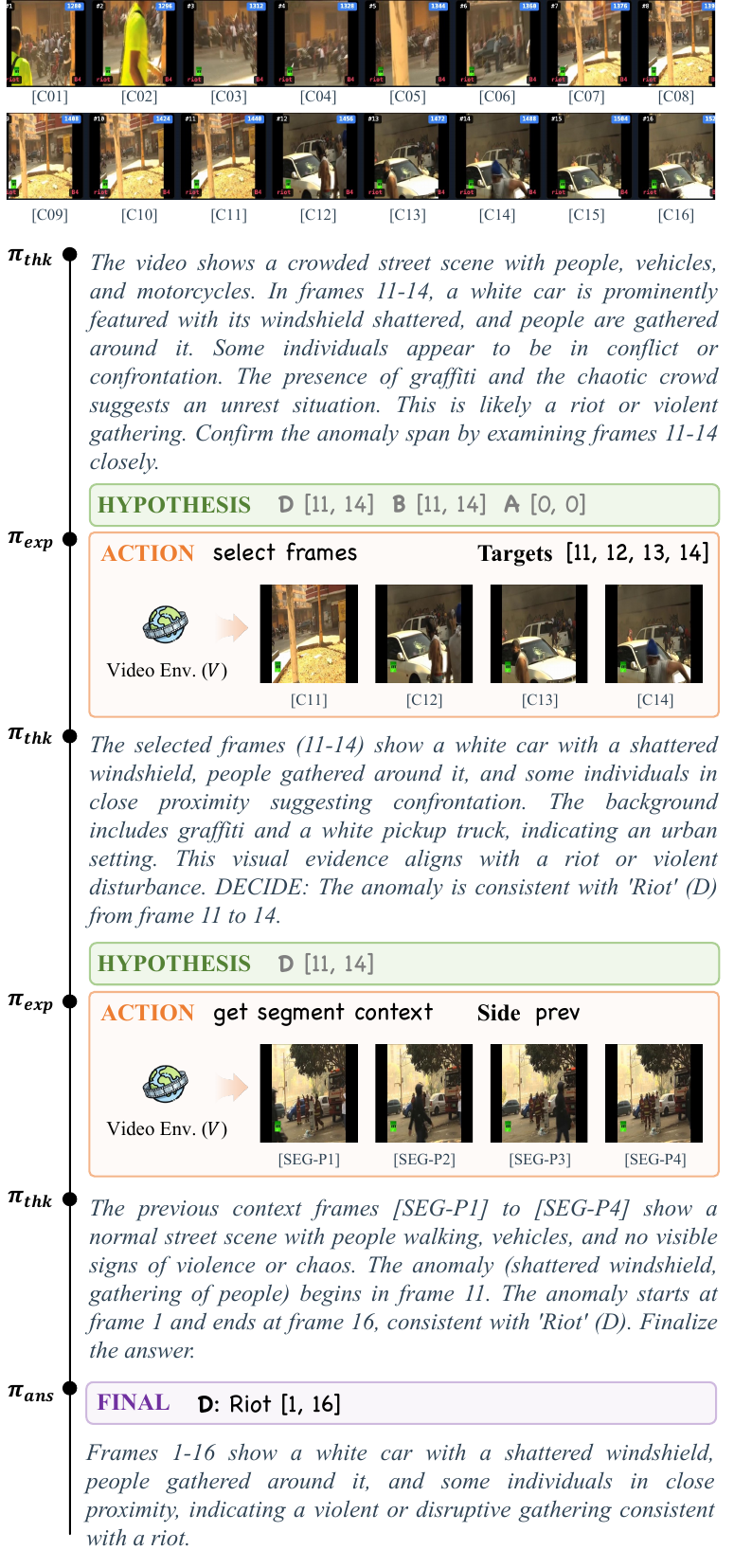}
\par\vspace{-0.2em}
{\small (a) Anom-$\pi$ (Ours)}
\par\vspace{0.45em}
\includegraphics[width=0.8\linewidth]{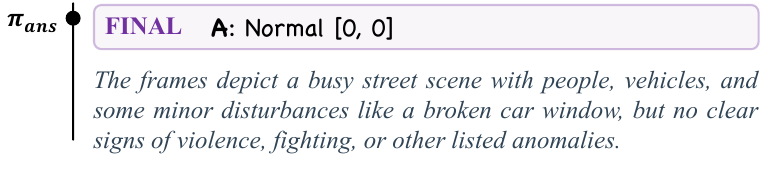}
\par\vspace{-0.2em}
{\small (b) Open-loop Baseline (QwenVL)}
\end{minipage}\hfill
\begin{minipage}[t]{0.49\textwidth}
\centering
\includegraphics[width=0.8\linewidth]{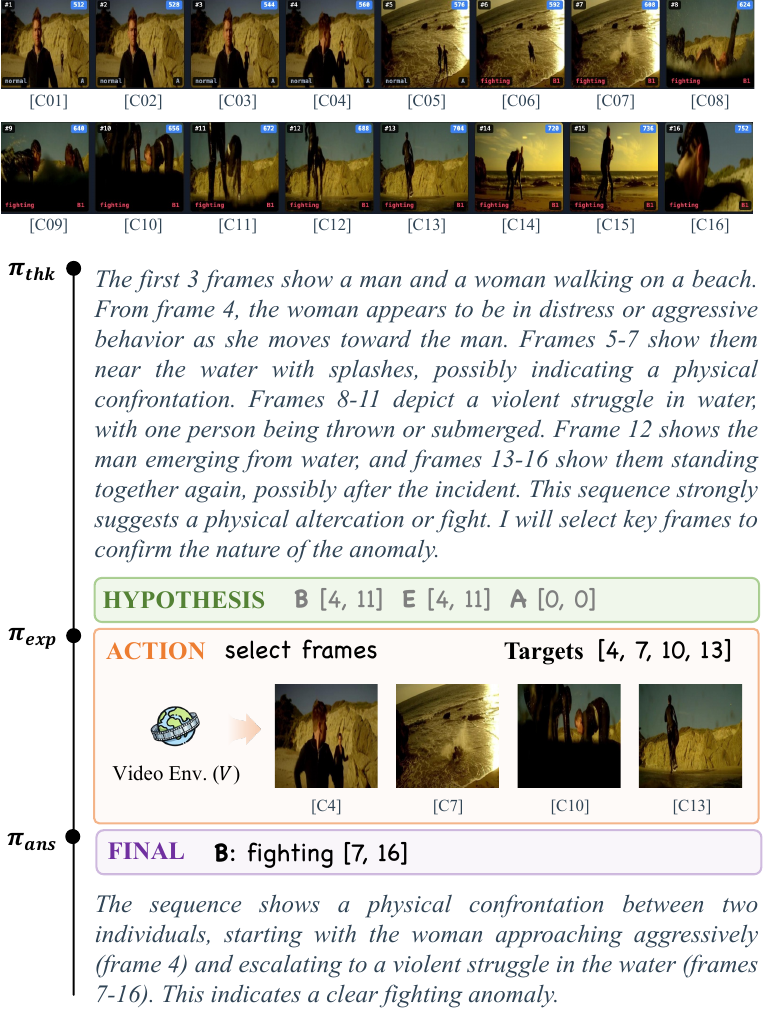}
\par\vspace{-0.2em}
{\small (c) Anom-$\pi$ (Ours)}
\par\vspace{0.45em}
\includegraphics[width=0.8\linewidth]{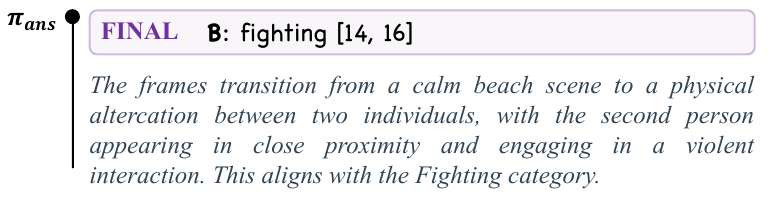}
\par\vspace{-0.2em}
{\small (d) Open-loop Baseline (QwenVL)}
\end{minipage}
\caption{Qualitative comparison cases I. Anom-$\pi$ is shown in (a,c), and the open-loop QwenVL baseline is shown in (b,d).}
\label{fig:appx_comparison_abcd}
\end{figure}

\begin{figure}[p]
\centering
\begin{minipage}[t]{0.49\textwidth}
\centering
\includegraphics[width=0.8\linewidth]{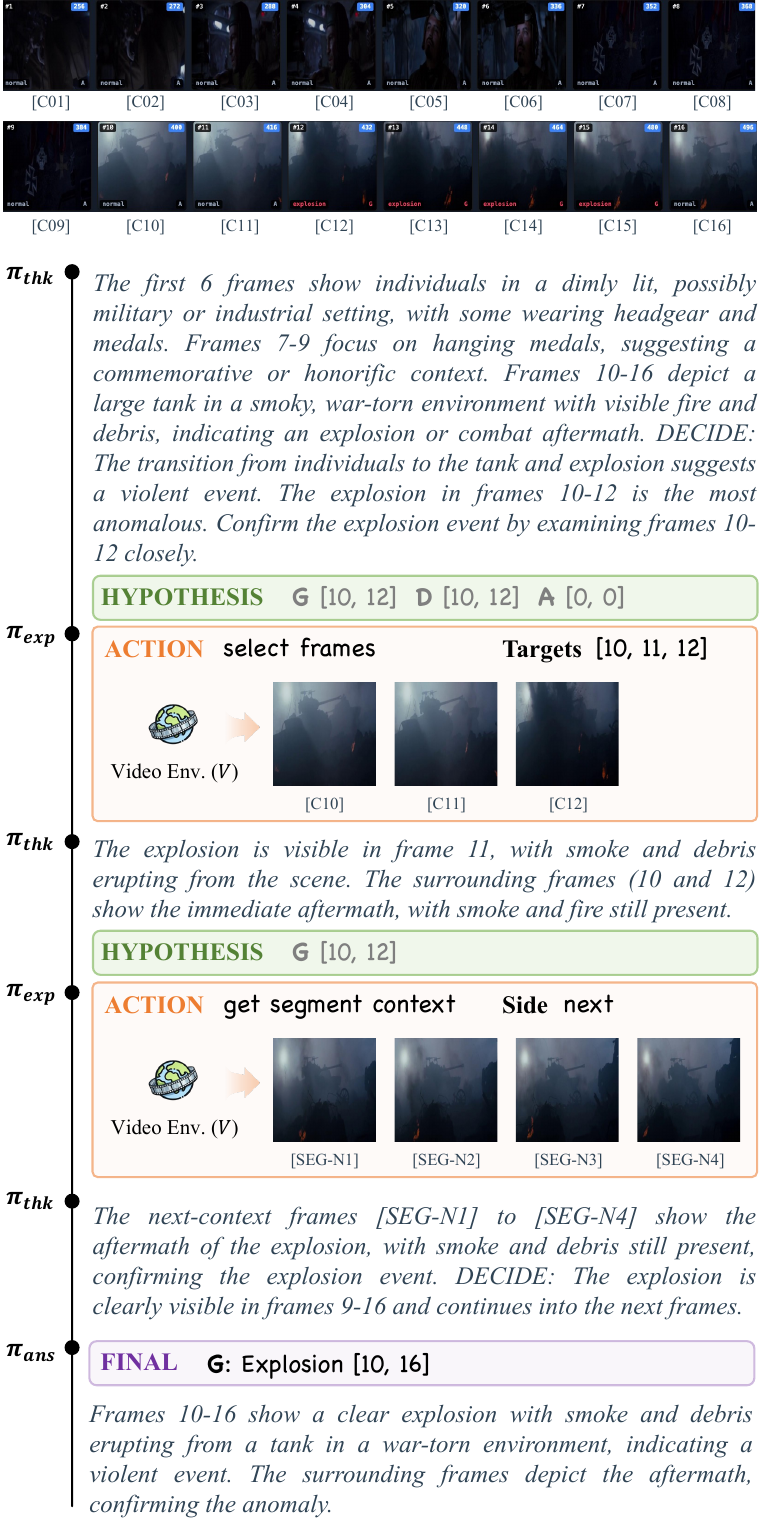}
\par\vspace{-0.2em}
{\small (e) Anom-$\pi$ (Ours)}
\par\vspace{0.45em}
\includegraphics[width=0.8\linewidth]{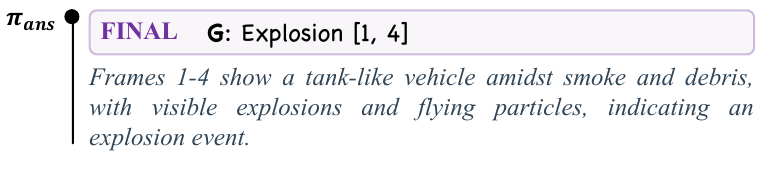}
\par\vspace{-0.2em}
{\small (f) Open-loop Baseline (QwenVL)}
\end{minipage}\hfill
\begin{minipage}[t]{0.49\textwidth}
\centering
\includegraphics[width=0.8\linewidth]{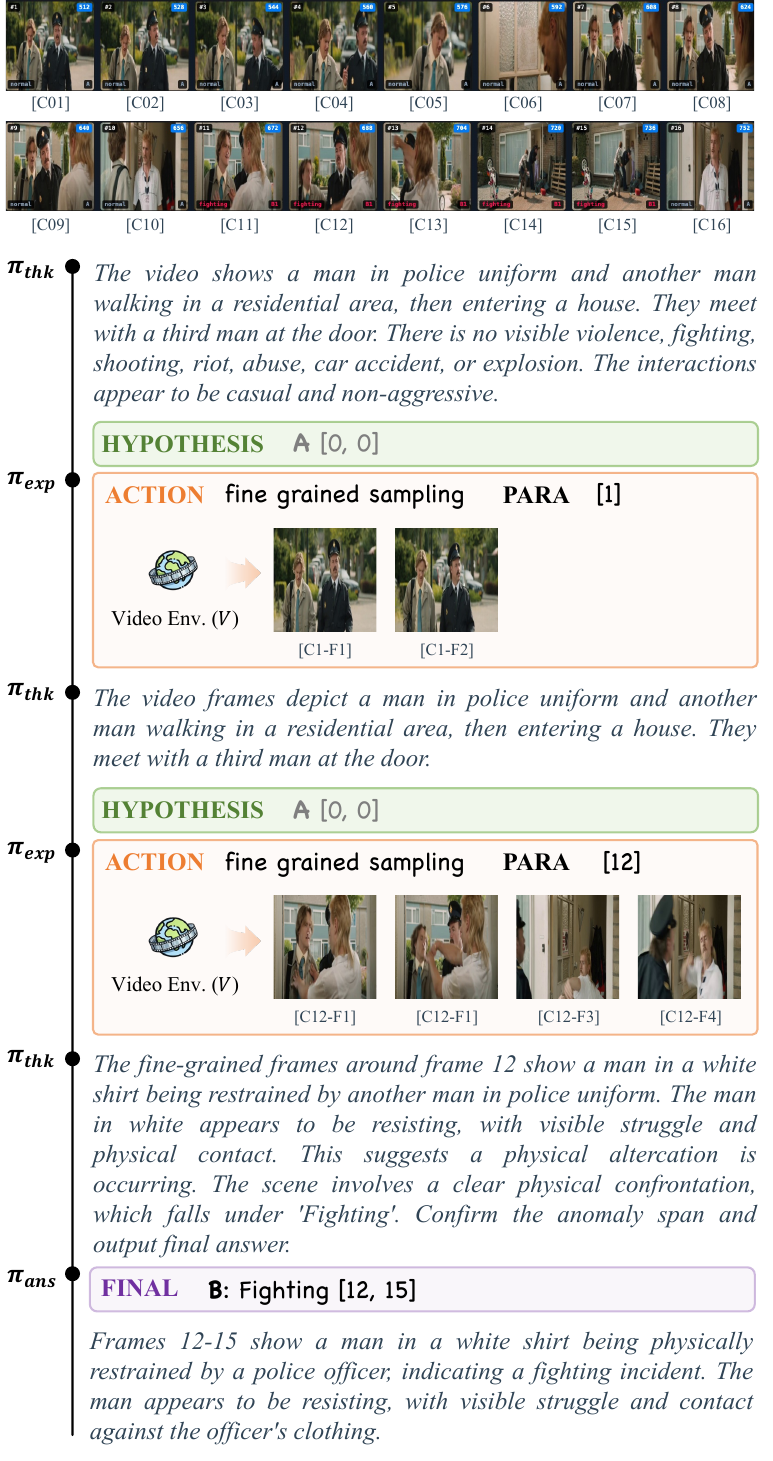}
\par\vspace{-0.2em}
{\small (g) Anom-$\pi$ (Ours)}
\par\vspace{0.45em}
\includegraphics[width=0.8\linewidth]{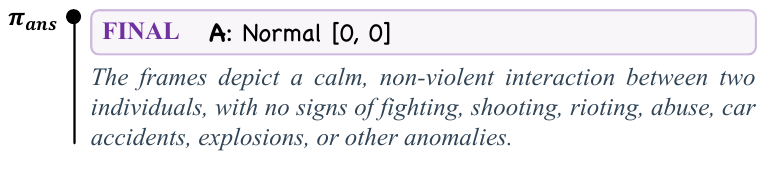}
\par\vspace{-0.2em}
{\small (h) Open-loop Baseline (QwenVL)}
\end{minipage}
\caption{Qualitative comparison cases II. Anom-$\pi$ is shown in (e,g), and the open-loop QwenVL baseline is shown in (f,h).}
\label{fig:appx_comparison_efgh}
\end{figure}

\paragraph{Failure cases.}
Fig.~\ref{fig:appx_failure_cases} illustrates two remaining limitations. In the redundant-exploration case, low visual
quality and heavy blur make the additionally queried frames and previous-segment context nearly indistinguishable, so
interaction increases cost without resolving the ambiguity. In the premature-stopping case, the agent inspects only a
narrow local moment in a prison confrontation and does not further verify whether the tense interaction develops into
violence. These cases suggest that future policies should better estimate information gain under poor visibility and
calibrate uncertainty before triggering \Final{} when ambiguity persists.

\begin{figure}[p]
\centering
\begin{minipage}[t]{0.49\textwidth}
\centering
\includegraphics[width=0.8\linewidth]{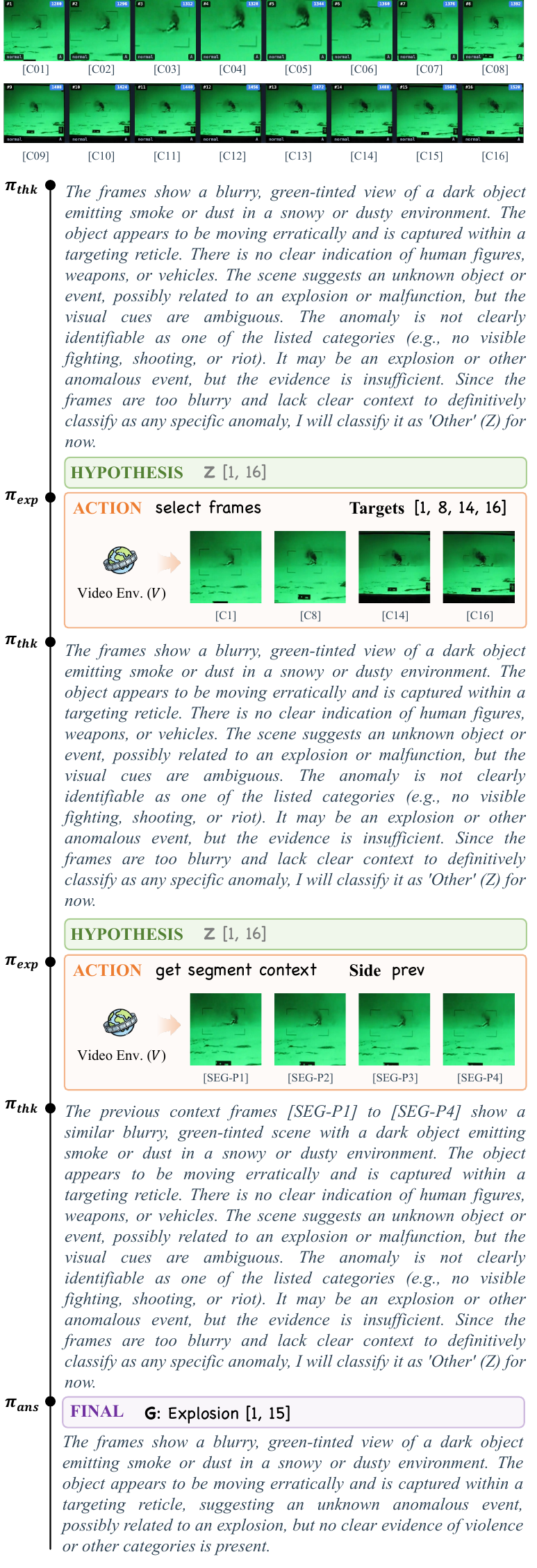}
\par\vspace{-0.2em}
{(a) Redundant exploration}
\end{minipage}\hfill
\begin{minipage}[t]{0.49\textwidth}
\centering
\includegraphics[width=0.8\linewidth]{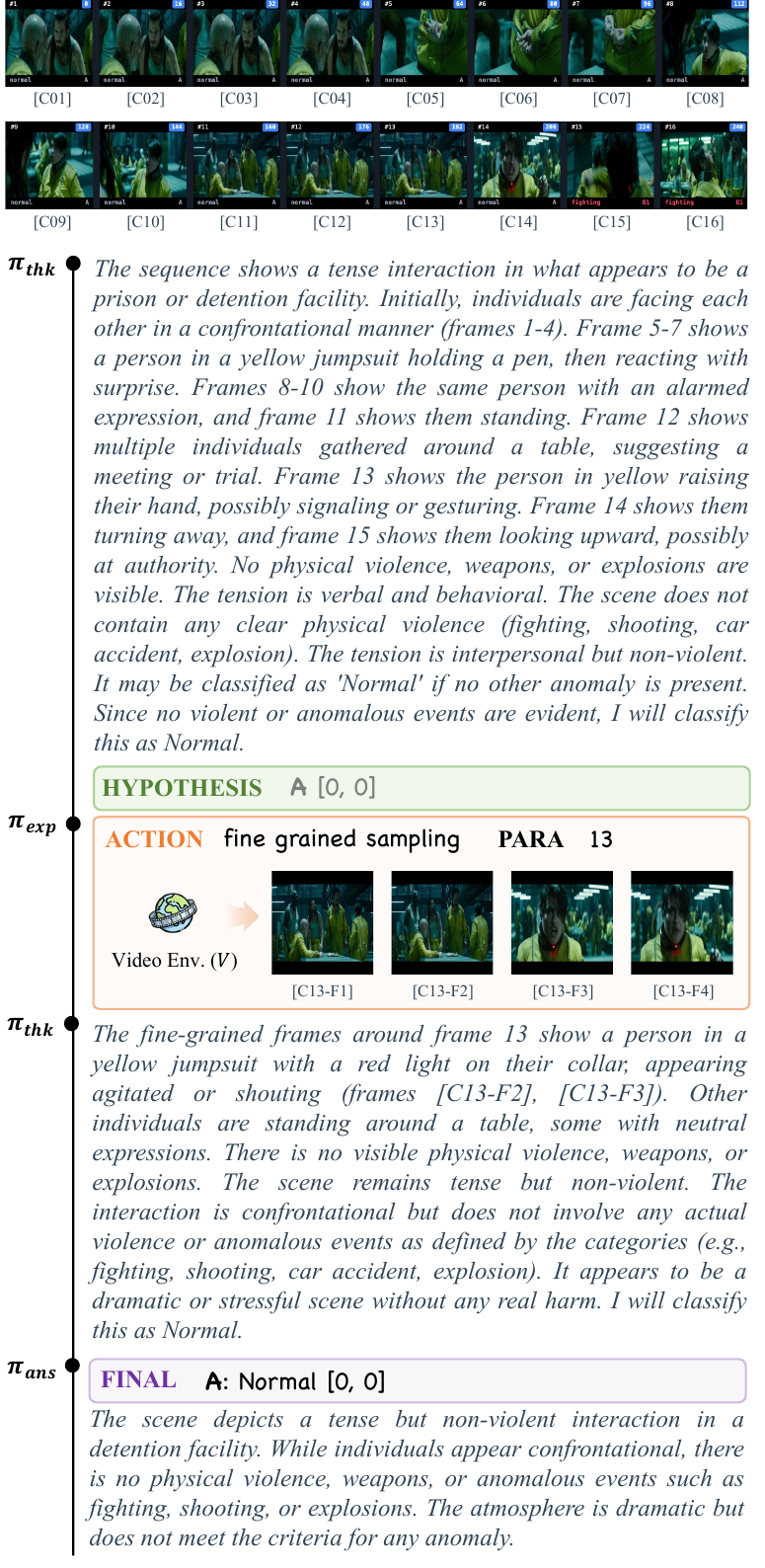}
\par\vspace{-0.2em}
{(b) Premature stopping}
\end{minipage}
\caption{Failure-case visualizations. (a) Redundant exploration under low visual quality. (b) Premature stopping under persistent temporal ambiguity.}
\label{fig:appx_failure_cases}
\end{figure}
\endgroup

\end{document}